\newtheorem{theorem}{Theorem}
\newtheorem{corollary}{Corollary}
\theoremstyle{definition}
\newtheorem{definition}{Definition}
\DeclareMathOperator*{\argmin}{arg\,min}
\acrodef{BO}[BO]{Bayesian optimization}
\acrodef{HPO}[HPO]{hyperparameter optimization}
\acrodef{TR}[TR]{trust region}
\acrodef{EI}[EI]{expected improvement}
\acrodef{TS}[TS]{Thompson sampling}
\acrodef{GP}[GP]{Gaussian process}
\acrodef{HDBO}[HDBO]{high-dimensional Bayesian optimization}
\acrodef{VAE}[VAE]{variational autoencoder}
\algrenewcommand\algorithmicrequire{\textbf{Input:}}
\algrenewcommand\algorithmicensure{\textbf{Output:}}
\newcommand{\thesbo}{\textsc{BAxUS}\xspace}
\newcommand{\hesbo}{\textsc{HeSBO}\xspace}
\newcommand{\turbo}{\textsc{TuRBO}\xspace}
\newcommand{\alebo}{\textsc{Alebo}\xspace}
\newcommand{\rembo}{\textsc{REMBO}\xspace}
\newcommand{\saasbo}{\textsc{SAASBO}\xspace}
\newcommand{\cmaes}{\textsc{CMA-ES}\xspace}
\newcommand{\lassobench}{\textsc{LassoBench}\xspace}
\newcommand{\T}{{\intercal}}
\newcommand{\Sparseembedding}{Sparse embedding\xspace}
\newcommand{\sparseembedding}{sparse embedding\xspace}
\newcommand{\sparseembeddings}{sparse embeddings\xspace}
\newcommand{\mlabel}[1]{\label{\thechapter@@#1}}
\DeclareRobustCommand{\mref}[1]{\ref{\thechapter@@#1}}
\definecolor{arrowred}{HTML}{E66100}
\definecolor{arrowblue}{HTML}{5D3A9B}
\title{Increasing the Scope as You Learn: \\Adaptive Bayesian Optimization in Nested Subspaces}
\author{%
  Leonard Papenmeier\\
  Lund University\\
  \texttt{leonard.papenmeier@cs.lth.se} \\
  \And
  Luigi Nardi \\
  Lund University, Stanford University, DBtune \\
  \texttt{luigi.nardi@cs.lth.se} \\
  \And
  Matthias Poloczek\thanks{The work was done before Matthias joined Amazon.} \\
  Amazon \\
  San Francisco, CA 94105, USA \\
  \texttt{matpol@amazon.com} \\
}
\begin{document}

\mathtoolsset{showonlyrefs}

\maketitle

\begin{abstract}
Recent advances have extended the scope of \ac{BO} to expensive-to-evaluate black-box functions with dozens of dimensions, aspiring to unlock impactful applications, for example, in the life sciences, neural architecture search, and robotics. 
However, a closer examination reveals that the state-of-the-art methods for \ac{HDBO} suffer from degrading performance as the number of dimensions increases or even risk failure if certain unverifiable assumptions are not met.
This paper proposes \thesbo that leverages a novel family of nested random subspaces to adapt the space it optimizes over to the problem. 
This ensures high performance while removing the risk of failure, which we assert via theoretical guarantees.
A comprehensive evaluation demonstrates that \thesbo achieves better results than the state-of-the-art methods for a broad set of applications.
\end{abstract}

\section{Introduction}\label{sec:introduction}
The optimization of expensive-to-evaluate black-box functions where no derivative information is available has found many applications, for example, in chemical engineering~\cite{lobato2017parallel,shields2021bayesian,hase2018phoenics,schweidtmann2018machine,burger2020mobile}, materials science~\cite{UENO201618,Frazier2016,packwood2017bayesian,souza2019deepfreak,herbol2018efficient,hase2021gryffin,hughes2021tuning}, aerospace engineering~\cite{lukaczyk2014active,lam2018advances}, hyperparameter optimization~\cite{snoek2012practical,pmlr-v54-klein17a,NIPS2011_86e8f7ab,hvarfner2022pibo}, neural architecture search~\cite{NEURIPS2018_f33ba15e,ru2021interpretable}, vehicle design~\cite{jones2008large,10.1007/s10898-018-0641-2}, hardware design~\cite{nardi2019practical,ejjeh2022hpvm2fpga}, drug discovery~\cite{negoescu2011the},
robotics~\cite{lizotte2007automatic,calandra2016bayesian,rai2018bayesian,tuprints5878,mayr2022skill},
and the life sciences~\cite{tallorin2018discovering,vsehic2021lassobench,cosenza2022multi}.
Here increasing the number of dimensions (or parameters) of the optimization problem usually allows for better solutions. 
For example, by exposing more process parameters of a chemical reaction, we obtain a more granular control of the process; for a design task, we may optimize a larger number of design decisions jointly; in robotics, we gain access to more sophisticated control policies.

A series of breakthroughs have recently pushed the envelope of \acl{HDBO} and facilitated a wider adoption in science and engineering. 
The key challenge for further scaling is the so-called curse of dimensionality.
The complexity of the task of finding an optimum grows exponentially with the number of dimensions~\citep{binois2022survey,eriksson2019scalable}.
Recently, methods that rely on local 'trust regions' have gained popularity. 
They usually achieve good performance for problems with up to a couple of dozen input parameters. 
However, we observe that their performance degrades for higher-dimensional problems. 
This is not surprising, given that trust regions have a smaller volume but still the full dimensionality of the problem.
Other state-of-the-art methods suppose the existence of a low-dimensional active subspace and enjoy great scalability if they find such a space. 
The caveat is that its existence is usually not known for practical applications. 
Moreover, the user needs to `guess' a good upper bound on its dimensionality to enjoy a good sample efficiency.

In this work, we propose a theoretically-founded approach for high-dimensional Bayesian optimization, \thesbo (\textbf{B}ayesian optimization with \textbf{a}daptively e\textbf{x}panding s\textbf{u}bspace\textbf{s}), that reliably achieves a high performance on a comprehensive set of applications.
\thesbo utilizes a family of nested embedded spaces to increase the dimensionality of the domain that it optimizes over as it collects more data.
As a byproduct, \thesbo can leverage an active subspace, if it exists, without requiring the user to 'guess' its dimensionality.
\thesbo is based on a novel random linear subspace embedding that enables a more efficient optimization and has strong theoretical guarantees.
We make the following contributions: 
\begin{enumerate}[leftmargin=*]
    \item We develop \thesbo that reliably achieves excellent solutions on a broad set of high-dimensional tasks, outperforming the state-of-the-art.
    \item We present a novel family of nested random embeddings that has the following properties:
    a) The \thesbo embedding provides a larger worst-case guarantee for containing a global optimum than the \hesbo embedding proposed by \cite{nayebi2019a}.
    b) The \thesbo embedding is an optimal \emph{\sparseembedding}, as defined in Def.~\ref{def:count-sketch-type}.
    c) Its probability of containing an optimum converges to the one of the \hesbo embedding as the input dimensionality $D\rightarrow \infty$.
    \item We conduct a comprehensive evaluation on a representative collection of benchmarks that demonstrates that \thesbo outperforms the state-of-the-art methods.
\end{enumerate}

The remainder of this paper is structured as follows.
Section \ref{sec:background_and_related_work} states the problem and discusses related work.
Section \ref{sec:algorithm} presents the \thesbo algorithm and the corresponding embedding.
Section \ref{sec:results} evaluates \thesbo on a variety of benchmarks.
We give concluding remarks in Section~\ref{sec:discussion}.

\section{Background}\label{sec:background_and_related_work}
The task is to find a minimizer
\[
\bm{x}^*\in \argmin_{\bm{x}\in \mathcal{X}} f(\bm{x}),
\] 
where $\mathcal{X} = [-1,+1]^D$.
The objective function $f: \mathcal{X}\rightarrow \mathbb{R}$ is an expensive-to-evaluate black-box function.
Hence the number of function evaluations needed to find an optimizer is crucial.
Evaluations may be subject to observational noise, i.e.,  $f(\bm{x}_i)+\varepsilon_i$, with $\varepsilon_i\sim \mathcal{N}(0,\sigma^2)$. 
This work focuses on \emph{scalable high-dimensional Bayesian optimization}, where the \emph{input dimensionality}~$D$ is in the hundreds, and the sampling budget may comprise a thousand or more function evaluations. 

\paragraph{Linear embeddings.}
A successful approach for \ac{HDBO} is to assume the existence of an \emph{active subspace}~\cite{constantine2015active}, i.e., there exist a space $\mathcal{Z}\subseteq \mathbb{R}^{d_e}$, with $d_e\leq D$ and a function $g: \mathcal{Z}\rightarrow \mathbb{R}$, such that for all $\bm{x}$: $g(T\bm{x}) = f(\bm{x})$ where $T\in \mathbb{R}^{d_e\times D}$ is a projection matrix projecting $\bm{x}$ onto $\mathcal{Z}$ and $d_e$ is the \emph{effective dimensionality} of the problem.
In practice, both $d_e$ and $\mathcal{Z}$ are unknown.

\rembo (Random embedding \ac{BO})~\cite{wang2016bayesian} and \hesbo (Hashing-enhanced subspace \ac{BO})~\cite{nayebi2019a} try to capture this active subspace by a randomly chosen linear subspace.
Therefore, they generate a random projection matrix~$S^\T$ that maps from a $d$-dimensional subspace~$\mathcal{Y}\in \mathbb{R}^d$ with $d\ll D$ (the \emph{target space})  to $\mathcal{X}$.
We call~$d$ the \emph{target dimensionality}.
For \rembo, each entry in~$S^\T$ is normally distributed. 
\rembo uses a heuristic to determine a hyperrectangle in~$\mathcal{Y}$ that it optimizes over. 
Note that the bounded domain may not contain a point that maps to an optimizer of~$f$, a risk aggravated by distortions introduced by the projection.
\citep{binois2015uncertainty, binois2015warped, binois2020choice} proposed ideas to mitigate the issue.
\hesbo's random projection assigns one target dimension and sign ($\pm 1$) to each input dimension.
This embedding is inspired by the count-sketch algorithm~\cite{charikar2002finding} for estimating frequent items in data streams.
The sparse projection matrix $S^\T$ is binary except for the signs, and each row has exactly one non-zero entry.
Even though this embedding avoids \rembo's distortions, as the authors proved, it has a lower probability of containing the optimum~\cite{alebo}.
\alebo~\citep{alebo} uses a Mahalanobis kernel and imposes linear constraints on the acquisition function to avoid projecting outside of $\mathcal{X}$. 

\paragraph{Non-linear embeddings.}
Several works use autoencoders to learn non-linear spaces for optimization, trading in sample efficiency.
\citet{tripp2020sample} change the training objective of a variational autoencoder (VAE)~\citep{Kingma2014} to make the target space more suitable for optimization.
They give higher weight to better-performing points when training the \ac{VAE} and show that this improves optimization.
\citet{moriconi2019high} incorporate the training of an autoencoder directly into the likelihood maximization of a \ac{GP} surrogate. The computational cost is cubic in the number of samples and the input dimension.
\citet{lu2018structured} and \citet{maus2022local} use autoencoders to learn embeddings of highly structured input spaces such as kernels or molecules.
Other approaches include partial least squares~\cite{bouhlel2018efficient} or sliced inverse regression~\cite{chen2020semi}. 

\paragraph{High-dimensional BO in the input space.} 
A popular approach to make \ac{HDBO} in the input space feasible is \acp{TR}~\cite{regis2016trust,pedrielli2016g,eriksson2019scalable,zhou2020trust}.
The \turbo algorithm~\citep{eriksson2019scalable} optimizes over bounded \acp{TR} instead of the global space, adapting their side lengths and the center points during the optimization process.
By restricting function evaluations to trust regions, \turbo addressed the problem of over-exploration;
see~\citep{eriksson2019scalable} for details.
Note that the \acp{TR} have full input dimensionality, which may impact \turbo's ability to scale to very large dimensions.
Nonetheless, \turbo set a new state-of-the-art by scaling to dozens on input dimensions and thousands of function evaluations.  
\citet{wan2021think} extended the idea of \acp{TR} to categorical and mixed spaces by using the Hamming distance to define the \ac{TR} boundaries.
\saasbo~\cite{saasbo} uses sparse priors on the \ac{GP} length scales which seems particularly valuable if the active subspace is axis-aligned. 
Indeed, \saasbo can outperform \turbo on certain benchmarks~\citep{saasbo}.
The cost of inference scales cubically with the number of function evaluations;
thus, \saasbo is not expected to scale beyond small sampling budgets, which is confirmed by our experiments.
Another line of research relies on the assumption that the input space has an additive structure ~\cite{kandasamy2015high,gardner2017discovering,wang2018batched,mutny2018efficient}.
Additive \acp{GP} rely on computationally expensive sampling methods to learn a decomposition of the input variables, which limits the scalability of such methods to problems of moderate dimensionalities and sampling budgets~\cite{nayebi2019a,eriksson2019scalable}.
\citet{wang2020learning} combined the meta-level algorithm \textsc{LA-MCTS} with \turbo to improve optimization performance by learning a hierarchical space partition.

In Sect.~\ref{sec:results} we evaluate the performances of \turbo, \saasbo, \alebo, and \hesbo.
Moreover, we study the popular \cmaes~\citep{cma-es} and random search~\citep{bergstra2012random}.

\section{The \thesbo algorithm}\label{sec:algorithm}
\citet{wang2016bayesian} showed that the \rembo embedding contains an optimum in the target space with probability one if $d\geq d_e$ and if there are no bounds on the target and input spaces, i.e.,  $\mathcal{Y}=\mathbb{R}^d$ and $\mathcal{X}=\mathbb{R}^D$.
For $d<d_e$, it is in general impossible to represent an optimum in $\mathcal{X}$ for arbitrary~$f$ because $S$ projects to a $d$-dimensional subspace in $\mathcal{X}$.
We call the probability of a target space to contain the optimum the \emph{success probability}.
For $d \geq d_e$, there is a positive success probability that increases with $d$~\cite{alebo,wang2016bayesian}.
The main problem is to set $d$ sufficiently small to avoid the detrimental effects of the curse of dimensionality, while keeping it as large as necessary to achieve a high probability that $\mathcal{Y}$ contains an optimum.

In practice, the active subspace and its dimensionality are usually unknown.
The performance of methods such as \rembo~\cite{wang2016bayesian}, \hesbo~\cite{nayebi2019a}, and \alebo~\cite{alebo} depends on choosing $d$ such that the success probability is high.
Therefore, they implicitly rely on guessing the effective dimensionality $d_e$ appropriately.
We argue that choosing the target dimensionality is problematic in many practical applications.
If chosen too small, the subspace cannot represent $f$ sufficiently well.
If it is chosen too large, the curse of dimensionality slows down the optimization.

\begin{algorithm}[t]
\caption{\thesbo}
\label{alg:thesbo2}
\begin{algorithmic}[1]
    \Require $b$: new bins per dimension split, $D$: input dimension, $m_D$: \# evaluations by which the input dimension is reached.
    \Ensure minimizer $\bm{x}^*\in \argmin_{\bm{x}\in\mathcal{X}}f(\bm{x})$.
    \State $d_0$ $\gets$ initial target dimensionality of the subspace given by Eq.~\eqref{eq:dinit}.
    \State Compute initial projection matrix $S^\T: \mathcal{Y}\rightarrow \mathcal{X}$ by \thesbo embedding for $D$ and $d_0$.
    \State Sample initial data $\mathcal{D} = \{(\bm{y}_1,f(S^\T\bm{y}_1)), \ldots\}$ and fit the \ac{GP} surrogate.
    \State $n \gets 0$
    \While{evaluation budget not exhausted}
        \State $L \gets L_{\textrm{init} }$ \Comment{Initialize the trust region}
        \State Calculate number of accepted ``failures'' as described in Sec.~\ref{subsec:controlling_fail_tolerance}: $\tau_{\textrm{fail} }^s \gets \max \left (1,\min \left ( \left \lfloor \frac{m_i^s}{k} \right \rfloor, d_n \right )\right )$
        \While{$L>L_{\min}$ \textbf{and} evaluation budget not exhausted}
            \State Find $\bm{y}$ by Thompson sampling in \acs{TR}, evaluate $f(S^\T\bm{y})$, and add to $\mathcal{D}$.
            \State Re-fit the GP hyperparameters.
            \State Adjust trust region, see Section~\ref{ssec:tr_approach} for details.
        \EndWhile
        \If{$d_n<D$}
            \State $d_{n+1} \gets \min(d_n\cdot{}(b+1), D)$.
             \State Increase $S^\T$ by Algorithm~\ref{alg:opei}.\Comment{See Appendix~\ref{app:opei}}
        \Else
            \State Re-sample initial data and discard previous observations, $d_{n+1} \gets d_n$.
        \EndIf
        \State $n \gets n+1$
    \EndWhile
    \State \textbf{Return} $S^\T(\argmin_{\bm{y}\in \mathcal{D}} f(S^\T\bm{y}))$ or $S^\T(\argmin_{\bm{y}\in \mathcal{D}} \mathbb{E}_n[f(S^\T \bm{y})])$. \Comment{Return the best observation in case of observations without noise, or the best point according to posterior mean in case of noisy observations.}
\end{algorithmic}
\end{algorithm}

The proposed algorithm, \thesbo, operates on target spaces of increasing dimensionality while preserving previous observations.
Let $d_{\textrm{init} }$ be the initial target dimensionality and $m$ the total evaluation budget.
\thesbo starts with a $d_{\textrm{init} }$-dimensional embedding that is increased over time until, after $m_D\leq m$ evaluations, it roughly reaches the input dimensionality $D$.
With this strategy, we can leverage the efficiency of \ac{BO} in low-dimensional spaces while guaranteeing to find an optimum in the limit.
Increasing the target dimensionality is enabled by a novel embedding, which lets us carry over observations from previous, lower-dimensional target spaces into more high-dimensional target spaces.
We further use a \ac{TR}-based approach based on \citet{eriksson2019scalable} to carry out optimization for high target dimensions effectively.
\thesbo uses a GP surrogate~\cite{williams2006gaussian} to model the function in the target space.
Algorithm~\ref{alg:thesbo2} gives the pseudocode for \thesbo.
In Appendix~\ref{app:global_convergence}, we prove global convergence for \thesbo.
We will now present the different components in detail.

\subsection{The sparse \thesbo subspace embedding}\label{ssec:embedding}
The \thesbo embedding uses a sparse projection matrix to map from $\mathcal{Y}$ to $\mathcal{X}$.
The number of non-zero entries in this matrix is equal to the input dimensionality $D$.
Another embedding with this property is the \hesbo embedding~\cite{nayebi2019a}.
Given the $D$ and a target dimensionality $d$, \hesbo samples a target dimension in $\{1,\ldots,d\}$ and a sign $\{\pm 1\}$ for each input dimension uniformly at random.
Conversely, each target dimension has a set of signed contributing input dimensions.
We call the set of contributing input dimensions to a target dimension a \emph{bin}.
These relations implicitly define the embedding matrix $S\in \{0,\pm1\}^{d\times D}$, where each column has exactly one non-zero entry~\cite{charikar2002finding}.
In the \hesbo embedding, the number of contributing input dimensions varies between 0 and $D$.

The interpretation of contributing input dimensions allows for an intuitive way to refine the embedding, which is shown in Figure~\ref{fig:splitting}.
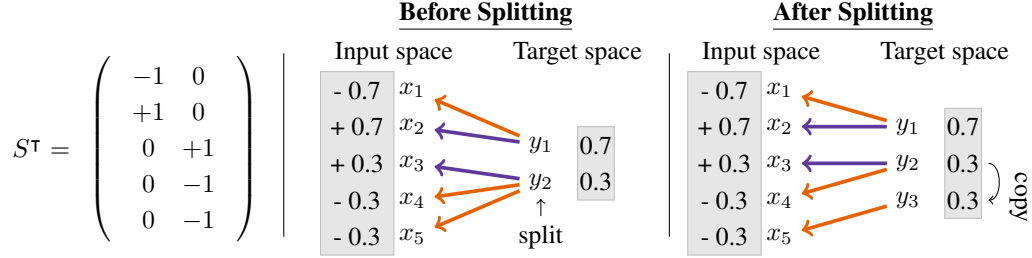
\begin{figure}
    \centering
    \resizebox{\textwidth}{!}{
        \begin{tikzpicture}

\node (ST) at (-4.8,-1.3) {$S^\T=$};
\matrix at (-3, -1.3) [matrix of math nodes,left delimiter=(,right delimiter=)](M){ 
    -1 & 0\\
    +1 & 0\\
    0 & +1\\
    0 & -1\\
    0 & -1\\
   };
\draw[-] (-1.5,0) -- (-1.5,-2.5);

\node (A) at (0,0) {Input space};
\node (BFS) at (1.25,0.5) {\underline{\textbf{Before Splitting}}};
\node (B) at (2.5,0) {Target space};
\draw[draw=gray!50,fill=gray!20] (-1,-0.25) rectangle ++(1,-2.5);
\node (x1val1) at (-0.5,-0.5) {- 0.7};
\node (x2val1) at (-0.5,-1) {+ 0.7};
\node (x3val1) at (-0.5,-1.5) {+ 0.3};
\node (x4val1) at (-0.5,-2) {- 0.3};
\node (x5val1) at (-0.5,-2.5) {- 0.3};

\node (x1) at (0.25,-0.5) {$x_1$};
\node (x2) at (0.25,-1) {$x_2$};
\node (x3) at (0.25,-1.5) {$x_3$};
\node (x4) at (0.25,-2) {$x_4$};
\node (x5) at (0.25,-2.5) {$x_5$};

\node (y1) at (2,-1.25) {$y_1$};
\node (y2) at (2,-1.75) {$y_2$};
\node (split) at (2, -2.5) {split};
\draw[->] (split) -- (y2);

\draw[draw=gray!50,fill=gray!20] (3,-1) rectangle ++(-0.5,-1);
\node (y1val1) at (2.75, -1.25) {0.7};
\node (y2val1) at (2.75, -1.75) {0.3};

\draw[->,arrowred,line width=0.5mm] (y1) -- (x1);
\draw[->,arrowblue,line width=0.5mm] (y1) -- (x2);
\draw[->,arrowblue,line width=0.5mm] (y2) -- (x3);
\draw[->,arrowred,line width=0.5mm] (y2) -- (x4);
\draw[->,arrowred,line width=0.5mm] (y2) -- (x5);

\draw[-] (3.75,0) -- (3.75,-2.5);

\node (C) at (5,0) {Input space};
\node (AFS) at (6.25,0.5) {\underline{\textbf{After Splitting}}};
\node (D) at (7.5,0) {Target space};
\draw[draw=gray!50,fill=gray!20] (4,-0.25) rectangle ++(1,-2.5);
\node (x1val2) at (4.5,-0.5) {- 0.7};
\node (x2val2) at (4.5,-1) {+ 0.7};
\node (x3val2) at (4.5,-1.5) {+ 0.3};
\node (x4val2) at (4.5,-2) {- 0.3};
\node (x5val2) at (4.5,-2.5) {- 0.3};

\node (x12) at (5.25,-0.5) {$x_1$};
\node (x22) at (5.25,-1) {$x_2$};
\node (x32) at (5.25,-1.5) {$x_3$};
\node (x42) at (5.25,-2) {$x_4$};
\node (x52) at (5.25,-2.5) {$x_5$};

\node (y12) at (7,-1) {$y_1$};
\node (y22) at (7,-1.5) {$y_2$};
\node (y32) at (7,-2) {$y_3$};

\draw[draw=gray!50,fill=gray!20] (8,-0.75) rectangle ++(-0.5,-1.5);
\node (y1val2) at (7.75, -1) {0.7};
\node (y2val2) at (7.75, -1.5) {0.3};
\node (y3val2) at (7.75, -2) {0.3};

\draw[->,arrowred,line width=0.5mm] (y12) -- (x12);
\draw[->,arrowblue,line width=0.5mm] (y12) -- (x22);
\draw[->,arrowblue,line width=0.5mm] (y22) -- (x32);
\draw[->,arrowred,line width=0.5mm] (y22) -- (x42);
\draw[->,arrowred,line width=0.5mm] (y32) -- (x52);

\draw[->] (y2val2.east) to [out=0,in=0] (y3val2.east);
\node[right,sloped,rotate=270] at (8.5,-1.5) {copy};

\end{tikzpicture}
    }
    \caption{Observations are kept when increasing the target dimensionality. We give an example of the splitting method for $D=5$ and $d=2$.
    The first target dimension $y_1$ has two contributing input dimensions, $x_1$ and $x_2$. $y_2$ has three contributing input dimensions, $x_3$, $x_4$, and $x_5$.
    By $S^\T$, a point $(0.7, 0.3)^\T$ in the target space is mapped to $(-0.7, +0.7, +0.3, -0.3, -0.3)^\T$ in the input space.
    Assigning the fifth input dimension to a new target dimension and copying the function values from the second target dimension does not change the observation in the input space.
    The new $S^\T$ is not shown but has one additional column with $-1$ in the last row, and the last row of the second column is set to 0.}
    \label{fig:splitting}
\end{figure}
We update the embedding matrix such that contributing input dimensions of the target dimension are re-assigned to the current bin and $b$ new bins.
We then say that we \emph{split} the corresponding target dimension.
Importantly, this type of embedding allows for retaining observations (see Figure~\ref{fig:splitting}).
Assume for example, that $y_i$ is the dimension to be split.
The contributing input dimensions are re-assigned to $y_i$ and three new target dimensions $y_j$, $y_k$, and $y_l$ (here, $b=3$);
the observations can be retained by copying the value of the coordinate $y_i$ to the coordinates $y_j$, $y_k$, and $y_l$.
Thus, the observations are contained in the old and in the new target space.
Algorithm~\ref{alg:opei} describes the procedure in detail.

In the \thesbo embedding, we force each bin of a target dimension to have roughly the same number of contributing input dimensions:
the bin sizes differ by at most one.
First, we create a random permutation of the input dimensions $1,\ldots,D$.
The list of input dimensions is split into $\min(d,D)$ individual bins.
If $d$ does not divide $D$, not all bins can have the same size.
We split the permutation of input dimensions such that the $i$-th bin has size $\lceil D/d \rceil$, if $i+d\lfloor D/d \rfloor \leq D$, and $\lfloor D/d \rfloor$ otherwise.
The first bins have one additional element with this construction if $d$ does not divide $D$.
We further randomly assign a sign to each input dimension.
The sign of the input dimensions and their assignment to target dimensions then implicitly define $S^\T$ (see Figure~\ref{fig:splitting}).
We now show that the \thesbo embedding has a strictly larger worst-case success probability than the \hesbo embedding.
We establish the following two definitions.
\begin{definition}[\textbf{\Sparseembedding matrix}]
    \label{def:count-sketch-type}
    A matrix $S\in \{0,\pm 1\}^{d\times D}$ is a \emph{\sparseembedding matrix} if and only if each column in $S$ has exactly one non-zero entry~\cite{woodruff2014sketching}.
\end{definition}
We formalize the event of ``recovering an optimum''~\cite{alebo} as follows.
\begin{definition}[\textbf{Success of a \sparseembedding}]
    \label{def:success}
    A success of a random \sparseembedding is the event $Y^*=$ ``All $d_e$ active input dimensions are mapped to distinct target dimensions.''
\end{definition}
It is important to note that the definition of a success is sufficient but not necessary for the embedding to contain a global optimum.
For example, if the origin is a global optimum, then both embeddings contain it with probability one.
In that sense, the above definition provides a \emph{worst-case guarantee}.
We refer to Definition~\ref{def:function-with-active-subspace} in Appendix~\ref{sec:proofs_embedding} for a formal definition of a sparse function.
In Theorem~\ref{lemma:ft_embedding}, we give the worst-case success probability of the \thesbo embedding.
All proofs have been deferred to Appendix~\ref{sec:proofs_embedding}.
Note that other than in the count-sketch algorithm~\cite{charikar2002finding}, our hashing function is not pairwise independent.
However, this does not affect our theoretical analysis.

\begin{theorem}[\textbf{Worst-case success probability of the \thesbo embedding}]
    \label{lemma:ft_embedding}
    Let $D$ be the input dimensionality and $d\geq d_e$ the dimensionality of the embedding.
Let $\beta_{\textrm{small} } = \left \lfloor \frac{D}{d} \right \rfloor$ and $\beta_{\textrm{large} } = \left \lceil \frac{D}{d} \right \rceil$ be the small and large bin sizes.
Then the probability of $Y^*$ (see Definition~\ref{def:success}) for the \thesbo embedding is
\begin{equation}
    p_B(Y^*; D,d,d_e) = \frac{\sum_{i=0}^{d_e}\binom{d(1+\beta_{\textrm{small} })-D}{i}\binom{D-d\beta_{\textrm{small} }}{d_e-i}\beta_{\textrm{small} }^i\beta_{\textrm{large} }^{d_e-i}}{\binom{D}{d_e}}.\label{eq:binning}
\end{equation}
\end{theorem}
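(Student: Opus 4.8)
The plan is to compute $p_B(Y^*)$ by a direct counting argument over the random assignment of the $d_e$ active input dimensions to bins. Recall that the \thesbo embedding first draws a uniformly random permutation of the $D$ input dimensions and then cuts it into $d$ consecutive bins, of which exactly $d\beta_{\textrm{large}}-D = d(1+\beta_{\textrm{small}})-D$ have size $\beta_{\textrm{large}}$\dots wait, I should be careful: with the construction described, the number of large bins is $D-d\beta_{\textrm{small}}$ and the number of small bins is $d(1+\beta_{\textrm{small}})-D$. The event $Y^*$ holds exactly when the $d_e$ active input dimensions land in $d_e$ \emph{distinct} bins. Because the permutation is uniform, the induced distribution on ``which bins the active coordinates occupy'' is equivalent to choosing a uniformly random $d_e$-subset of the $D$ positions; so the denominator is $\binom{D}{d_e}$ and the numerator counts the $d_e$-subsets of positions that hit $d_e$ distinct bins.

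First I would set up the numerator as a sum over $i$, where $i$ is the number of active coordinates that fall into \emph{small} bins (so $d_e-i$ fall into large bins), and both groups must occupy distinct bins. Choosing which small bins are hit: $\binom{d(1+\beta_{\textrm{small}})-D}{i}$ ways (that binomial coefficient is the number of small bins). Choosing which large bins are hit: $\binom{D-d\beta_{\textrm{small}}}{d_e-i}$ ways. Then, for each chosen small bin we pick one of its $\beta_{\textrm{small}}$ positions, giving a factor $\beta_{\textrm{small}}^i$; for each chosen large bin we pick one of its $\beta_{\textrm{large}}$ positions, giving $\beta_{\textrm{large}}^{d_e-i}$. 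Summing over $i$ from $0$ to $d_e$ and dividing by $\binom{D}{d_e}$ yields exactly \eqref{eq:binning}. I would then note two sanity checks: when $d\mid D$ all bins have size $\beta_{\textrm{small}}=\beta_{\textrm{large}}=D/d$, the sum collapses to the single term $i=d_e$ (the small-bin binomial is $\binom{d}{d_e}$), recovering $\binom{d}{d_e}(D/d)^{d_e}/\binom{D}{d_e}$, the classic ``balls into equal bins, all distinct'' probability; and when $d_e=0$ or $d_e=1$ the formula gives $1$, as it must.

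The main obstacle — really the only subtle point — is making the combinatorial reduction rigorous: justifying that drawing a random permutation and cutting into bins of the prescribed sizes is equivalent, for the purposes of the event $Y^*$, to sampling a uniform $d_e$-subset of positions, and carefully handling that the ``first bins get the extra element'' tie-breaking rule does not bias anything (it doesn't, because the permutation is already uniform, so which \emph{labels} sit in the large vs.\ small bins is uniform regardless of how we place the cuts). I would also double check the edge case where a binomial coefficient has a negative or too-large lower argument, e.g.\ if $i$ exceeds the number of small bins; by the convention $\binom{n}{k}=0$ for $k>n$ such terms vanish automatically, so the stated range $0\le i\le d_e$ is harmless. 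Everything else is bookkeeping, and the identity $\sum_i (\text{numerator terms}) = \binom{D}{d_e}$ when $d_e\le d$ (Vandermonde-type) provides a final consistency check that the probability is well-defined and at most one.
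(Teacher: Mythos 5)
Your proposal is correct and takes essentially the same route as the paper's proof: a uniformity argument gives the denominator $\binom{D}{d_e}$, and the numerator is obtained by summing over the number $i$ of active dimensions falling in the $d(1+\beta_{\textrm{small}})-D$ small bins versus the $D-d\beta_{\textrm{small}}$ large bins, with the within-bin position factors $\beta_{\textrm{small}}^i\beta_{\textrm{large}}^{d_e-i}$ and the zero-binomial convention handling degenerate terms. The only difference is presentational: the paper first derives the balanced case ($d\mid D$) and then generalizes, while you write down the general formula directly and use the balanced case (and Vandermonde) as consistency checks.
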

Figure~\ref{fig:success_probabilities} shows the worst-case success probabilities of the \thesbo and \hesbo embeddings for three different settings of $D$.
\begin{figure}[bt]
    \centering
    \resizebox{\textwidth}{!}{\includeinkscape{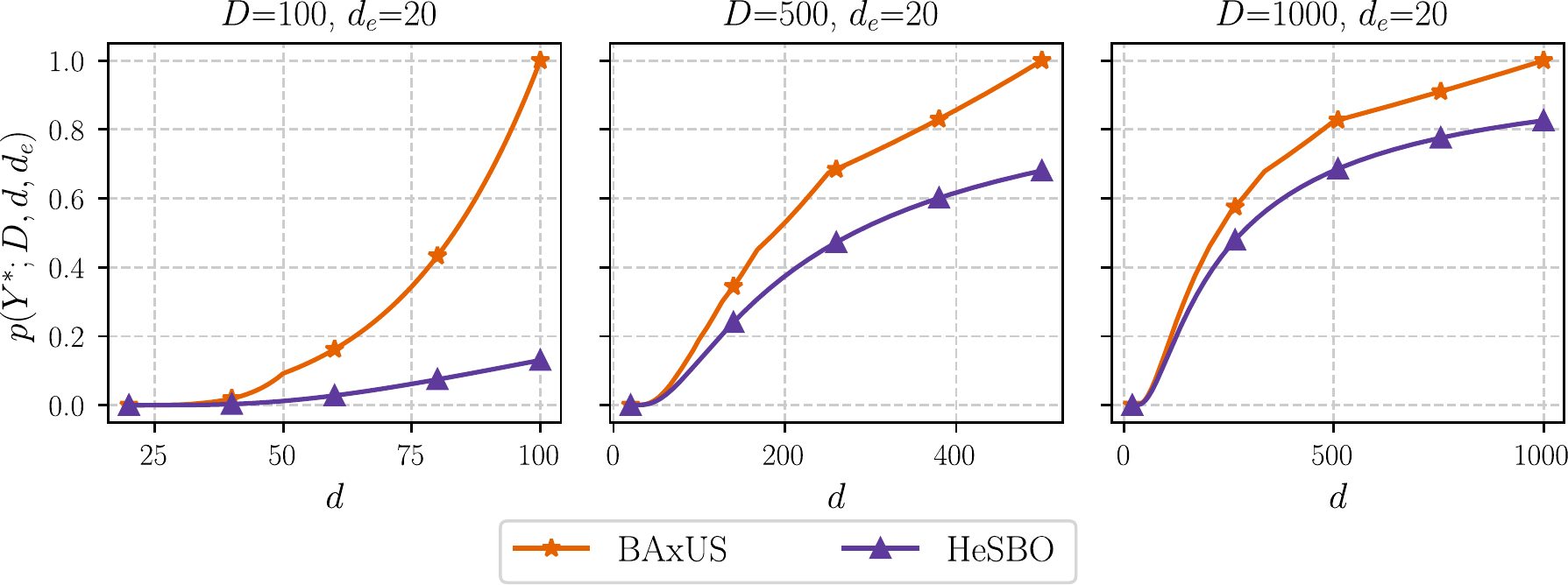_tex}}
    \caption{The worst-case guarantees for the success probabilities $p(Y^*; D,d,d_e)$ of the \thesbo and \hesbo embeddings for different input dimensionalities, $D{=}100$ (left), $D{=}500$ (center), $D{=}1000$ (right), as a function of the target dimensionality~$d$. The effective dimensionality is~$d_e{=}20$.
    The \thesbo embedding has a higher worst-case success probability than the \hesbo embedding.
    The improvement is large for input dimensionalities in the low hundreds and still substantial for 1000D tasks. In accordance with the theoretical analysis, the difference vanishes as the input dimension~$D$ grows.
    }
    \label{fig:success_probabilities}
\end{figure}
The worst-case success probability of the \hesbo embedding is given by $p_H(Y^*;d,d_e)=d!/((d-d_e)!d^{d_e})$~(see~\cite{alebo} and Appendix~\ref{proof:equivalence}).
It is independent of $D$ but is shown for varying $d$-ranges on the $x$-axis, therefore the probabilities seem to change between the different subplots.
The \thesbo embedding ensures that the worst-case success probability is one for~$d = D$.
Discontinuities in the curve of the \thesbo embedding occur due to the unequal bin sizes in the \thesbo embedding's worst-case success probability.
The difference between the two embeddings in Figure~\ref{fig:success_probabilities} is particularly striking when $d_e$ is high:
for example, for~$d_e=20$ \hesbo requires $d=1000$ to reach a worst-case success probability of approximately $0.8$, whereas the \thesbo embedding has a success probability of $1$ as soon as $d=D$.
For finite $D$, \hesbo's worst-case success probability is smaller than \thesbo' success probability.
\begin{corollary}
    \label{corr:equivalence}
    For $D\rightarrow \infty$, the worst-case success probability of the \thesbo embedding is
\begin{equation}
    \lim_{D\rightarrow \infty}p_B(Y^*;D,d,d_e)=\frac{d!}{(d-d_e)!d^{d_e}},
\end{equation}
and hence matches \hesbo's worst-case success probability $p_H(Y^*;d,d_e)$.
\end{corollary}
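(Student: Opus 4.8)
The plan is to start from the closed form in Theorem~\ref{lemma:ft_embedding} and take the limit $D\to\infty$ with $d$ and $d_e$ fixed. First I would unpack the combinatorics behind Eq.~\eqref{eq:binning}. Write $D = qd + r$ with $q = \beta_{\textrm{small}} = \lfloor D/d \rfloor$ and $0\le r < d$; then a direct check gives $d(1+\beta_{\textrm{small}}) - D = d - r$ (the number of bins of the small size $q$) and $D - d\beta_{\textrm{small}} = r$ (the number of bins of the large size $\beta_{\textrm{large}}$, which equals $q+1$ when $r>0$ and $q$ when $r=0$). So for $D$ large enough that $D\ge d\ge d_e$ the numerator of $p_B$ reads $N(D) := \sum_{i=0}^{d_e}\binom{d-r}{i}\binom{r}{d_e-i}\beta_{\textrm{small}}^{\,i}\beta_{\textrm{large}}^{\,d_e-i}$, and the denominator is $\binom{D}{d_e}$.

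Next I would collect the asymptotics. For the denominator, $\binom{D}{d_e} = \frac{D(D-1)\cdots(D-d_e+1)}{d_e!} = \frac{D^{d_e}}{d_e!}\bigl(1+o(1)\bigr)$. For the numerator, as $D\to\infty$ we have $q=\beta_{\textrm{small}}\to\infty$ and $\beta_{\textrm{large}}\in\{q,q+1\}$, so $\beta_{\textrm{small}}^{\,i}\beta_{\textrm{large}}^{\,d_e-i} = q^{d_e}\bigl(1+O(1/q)\bigr)$ uniformly over the finitely many indices $i\in\{0,\dots,d_e\}$, and $q = D/d + O(1)$ yields $q^{d_e} = (D/d)^{d_e}\bigl(1+o(1)\bigr)$. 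The binomial prefactors $\binom{d-r}{i}$ and $\binom{r}{d_e-i}$ are bounded (they depend only on $d,r,d_e$), and the Vandermonde identity gives $\sum_{i=0}^{d_e}\binom{d-r}{i}\binom{r}{d_e-i} = \binom{d}{d_e}$ \emph{independently of $r$}. Hence $N(D) = \binom{d}{d_e}(D/d)^{d_e}\bigl(1+o(1)\bigr)$. I expect this to be the one delicate point: the residue $r = D \bmod d$ oscillates as $D$ grows, so one cannot just ``take a limit of $r$''; the Vandermonde identity is precisely what makes the leading coefficient insensitive to $r$, and the remaining error terms are uniform because the sum has only $d_e+1$ terms.

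Finally I would divide the two expansions:
\[
p_B(Y^*;D,d,d_e) \;=\; \frac{\binom{d}{d_e}(D/d)^{d_e}\bigl(1+o(1)\bigr)}{\,D^{d_e}/d_e!\;\bigl(1+o(1)\bigr)} \;\xrightarrow[D\to\infty]{}\; \frac{d_e!\,\binom{d}{d_e}}{d^{d_e}} \;=\; \frac{d!}{(d-d_e)!\,d^{d_e}},
\]
which is exactly \hesbo's worst-case success probability $p_H(Y^*;d,d_e)$ stated in the text, completing the proof. Everything outside the handling of the oscillating remainder is routine binomial-coefficient asymptotics.
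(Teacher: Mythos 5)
Your proposal is correct, and it proves the corollary by a genuinely different route than the paper. You work directly with the exact expression from Theorem~\ref{lemma:ft_embedding}: writing $D=qd+r$, you identify $d-r$ small and $r$ large bins, use Vandermonde's identity to see that the sum of the binomial prefactors equals $\binom{d}{d_e}$ \emph{independently of the oscillating remainder} $r=D\bmod d$, bound $\beta_{\textrm{small}}^{\,i}\beta_{\textrm{large}}^{\,d_e-i}$ between $q^{d_e}$ and $(q+1)^{d_e}$ uniformly over the finitely many summands, and combine with $\binom{D}{d_e}\sim D^{d_e}/d_e!$ to get the limit $d_e!\binom{d}{d_e}/d^{d_e}=\frac{d!}{(d-d_e)!\,d^{d_e}}$; all steps check out, including the edge case $r=0$. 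The paper instead never expands Eq.~\eqref{eq:binning} asymptotically: it sandwiches $p_B(Y^*;D,d,d_e)$ between \hesbo's probability $\frac{d!}{(d-d_e)!d^{d_e}}$ from below (using the optimality result, Corollary~\ref{corr:optimality}) and the idealized balanced expression $\beta^{d_e}\binom{d}{d_e}/\binom{D}{d_e}$ with $\beta=D/d$ from above, then evaluates the limit of the upper bound via Stirling's approximation with explicit control of the error term $r(x)$, and concludes by the squeeze theorem. Your argument is more elementary and self-contained — no Stirling, no squeeze, and no reliance on Corollary~\ref{corr:optimality} or on \hesbo's formula as a pre-established lower bound (you only need $p_H$ to state the final identification) — at the cost of handling the non-convergent remainder $r$ yourself, which you correctly flag as the delicate point and resolve via Vandermonde; the paper's sandwich sidesteps that oscillation entirely because its bounding sequences do not depend on $r$, but it imports heavier machinery and earlier results to do so.
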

We show that the \thesbo embedding is optimal among \sparseembeddings.
\begin{corollary}
    \label{corr:optimality}
    With the same input, target, and effective dimensionalities ($D$, $d$, and $d_e$), no \sparseembedding has a higher worst-case success probability than the \thesbo embedding.

\end{corollary}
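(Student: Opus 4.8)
The plan is to reduce the statement to a combinatorial extremal property of elementary symmetric polynomials. Any random \sparseembedding is determined, up to the signs (which do not affect the event $Y^*$), by a distribution over maps $h\colon\{1,\dots,D\}\to\{1,\dots,d\}$ sending each input dimension to the target dimension of its single non-zero entry; write $\bm n(h)=(n_1(h),\dots,n_d(h))$ for the induced bin sizes, so $n_j\ge 0$ and $\sum_j n_j=D$. For an objective with active set $A$ of size $d_e$, the event $Y^*$ (Def.~\ref{def:success}) is exactly that $h$ is injective on $A$, so the worst-case success probability of the embedding is $p^\star=\min_{|A|=d_e}\Pr_h[\,h|_A\text{ injective}\,]$. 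I would first bound $p^\star$ by an average over $A$: since the minimum is at most the mean over all $\binom{D}{d_e}$ sets,
\[
p^\star\ \le\ \frac{1}{\binom{D}{d_e}}\sum_{|A|=d_e}\Pr_h[\,h|_A\text{ inj}\,]\ =\ \mathbb{E}_h\!\left[\frac{\#\{A:\,|A|=d_e,\ h|_A\text{ inj}\}}{\binom{D}{d_e}}\right]\ =\ \mathbb{E}_h\!\left[\frac{e_{d_e}\big(\bm n(h)\big)}{\binom{D}{d_e}}\right],
\]
where $e_{d_e}$ is the $d_e$-th elementary symmetric polynomial, because a set $A$ of $d_e$ input dimensions lying in $d_e$ distinct bins is obtained by choosing the $d_e$ bins and one element from each, i.e.\ $\#\{A\}=\sum_{|J|=d_e}\prod_{j\in J}n_j=e_{d_e}(n_1,\dots,n_d)$.

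Consequently $p^\star\le\max_{\bm n}e_{d_e}(\bm n)/\binom{D}{d_e}$, the maximum being over all integer profiles $\bm n\ge 0$ with $\sum_j n_j=D$. It remains to show (i) this maximum is attained by the balanced profile with $d-r$ bins of size $\beta_{\textrm{small}}=\lfloor D/d\rfloor$ and $r=D\bmod d$ bins of size $\beta_{\textrm{large}}=\lceil D/d\rceil$, and (ii) the \thesbo embedding achieves this value, so that it is optimal. For (ii): the \thesbo map is a uniformly random relabeling of a fixed block partition with the balanced bin sizes, hence for \emph{every} $A$ one has $\Pr_h[h|_A\text{ inj}]=e_{d_e}(\beta_{\textrm{small}},\dots,\beta_{\textrm{large}})/\binom{D}{d_e}$, and expanding this according to how many of the $d_e$ active dimensions fall in small versus large bins reproduces exactly $p_B(Y^*;D,d,d_e)$ of Eq.~\eqref{eq:binning}. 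For (i) I would use a smoothing argument: fixing all coordinates but $n_a,n_b$ and letting $\bm m$ denote the other $d-2$, grouping terms of $e_{d_e}$ by whether $a$ and/or $b$ is selected gives $e_{d_e}=e_{d_e}(\bm m)+e_{d_e-1}(\bm m)(n_a+n_b)+e_{d_e-2}(\bm m)\,n_a n_b$; if $n_a\ge n_b+2$, the move $(n_a,n_b)\mapsto(n_a-1,n_b+1)$ keeps $n_a+n_b$ and $\bm m$ fixed and changes $e_{d_e}$ by $e_{d_e-2}(\bm m)(n_a-n_b-1)\ge 0$. Thus moving a unit from a largest to a smallest coordinate never decreases $e_{d_e}$, while it strictly decreases $\sum_j n_j^2$, so iterating terminates at a profile with all pairwise differences at most one, i.e.\ a permutation of the \thesbo profile, on which the symmetric polynomial $e_{d_e}$ takes the same value; hence the balanced profile is a global maximizer.

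The main obstacle, I expect, is the reduction in the first two paragraphs rather than the smoothing step: one has to notice that the adversarial minimum over active sets is dominated by the average over active sets, that this average collapses to $\mathbb{E}_h[e_{d_e}(\bm n(h))]/\binom{D}{d_e}$ regardless of the embedding's law, and that the \thesbo embedding — being exchangeable in the input dimensions with a deterministic balanced profile — meets this bound with equality. The smoothing argument itself is standard once the quadratic-in-$(n_a,n_b)$ expansion is written down, modulo the degenerate cases $d_e\in\{0,1\}$ (trivial) and $d_e=d$ (where $e_d=\prod_j n_j$ and the claim is the integer AM--GM inequality); note that $d\ge d_e$ is assumed throughout.
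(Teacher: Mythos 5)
Your proposal is correct, and it reaches the same combinatorial core as the paper — the numerator of the success probability is the elementary symmetric polynomial $e_{d_e}$ of the bin sizes, and optimality means showing the (near-)balanced integer profile maximizes it — but it gets there by a genuinely different route at both ends. On the reduction side, the paper fixes a bin-size profile $\beta_1,\dots,\beta_d$ and writes the worst-case success probability directly as $e_{d_e}(\beta_1,\dots,\beta_d)/\binom{D}{d_e}$, which implicitly treats the assignment of active dimensions to positions as uniform; your step $\min_{|A|=d_e}\Pr_h[h|_A\text{ injective}]\le \mathbb{E}_h\bigl[e_{d_e}(\bm n(h))\bigr]/\binom{D}{d_e}\le \max_{\bm n}e_{d_e}(\bm n)/\binom{D}{d_e}$ is a cleaner and strictly more general treatment, since it covers arbitrary laws over \sparseembedding matrices (including deterministic or non-exchangeable ones) and makes precise why exchangeability lets \thesbo meet the bound with equality for every active set (your part (ii) is essentially Theorem~\ref{lemma:ft_embedding}). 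On the extremal side, the paper splits into two cases: when $d$ divides $D$ it invokes Maclaurin's inequality to get $e_{d_e}(\bm\beta)\le\beta^{d_e}\binom{d}{d_e}$ with equality iff all bins are equal, and when $d$ does not divide $D$ it uses Schur-concavity of $e_{d_e}$ together with a majorization argument showing no feasible integer profile is majorized by the near-balanced one; your smoothing argument — expanding $e_{d_e}$ as $e_{d_e}(\bm m)+e_{d_e-1}(\bm m)(n_a+n_b)+e_{d_e-2}(\bm m)n_an_b$, moving a unit from a larger to a smaller bin, and using $\sum_j n_j^2$ as a strictly decreasing potential — is more elementary, avoids Maclaurin and majorization entirely, and handles the divisible and non-divisible cases uniformly. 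Both arguments are sound; yours trades the named inequalities for a self-contained exchange argument and a sharper formalization of ``worst-case'' over active sets, while the paper's buys brevity in the balanced case at the cost of a somewhat more delicate majorization discussion for the unbalanced one.
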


\subsection{Trust-region approach}\label{ssec:tr_approach}
Similar to \turbo~\cite{eriksson2019scalable}, \thesbo operates in trust regions (\acp{TR}).
\acp{TR} are hyper-rectangles in the input space.
Their shape is determined by their \emph{base side length} $L$ and the \ac{GP} length scales. The side length for each dimension is proportional to the corresponding length scale of the \ac{GP} kernel fitted to the data.
The idea of this construction is that length scales indicate how quickly the function changes along the associated dimension.
Thus, the \ac{TR} is rescaled accordingly.
The volume of a \ac{TR} is shrunk when \turbo fails $\tau_{\textrm{fail} }$ consecutive times to make progress, i.e., to find a better solution.
If the algorithm consecutively makes progress for $\tau_{\textrm{success} }=3$ times, it expands the \ac{TR}.
It restarts when the base side length $L$ of the current \ac{TR} falls below a threshold $L_{\min}\coloneqq2^{-7}$.
In that case, it discards all observations for the \ac{TR} and initializes a new \ac{TR} on new samples.
\Acp{TR} enable \turbo to focus on regions of the space close to the incumbent, i.e., the current best solution found by the algorithm.
To choose the next evaluation point, \turbo uses Thompson sampling~\cite{10.2307/2332286}, i.e., it draws a realization of the \ac{GP} on a set of candidate locations in the \ac{TR} and then selects a point of minimum sampled value.

\Acp{TR} are an essential component of \thesbo because the target dimensionality usually grows exponentially during a run of the algorithm.
We use the same hyperparameter settings as \turbo~\cite{eriksson2019scalable} with the following modifications.
First, we change the criterion for when to restart a \ac{TR}.
Instead of restarting a \ac{TR} when it becomes too small, we increase the target dimensionality by splitting each target dimension into several new bins unless \thesbo has already reached the input dimensionality.
In this case, we reset the \ac{TR} base side length to the initial value and re-initialize the algorithm with a new random set of initial observations.
By resetting the base side length, we also avoid convergence to a particular local minimum as the \ac{TR} covers large regions of the space again.
\turbo solves this problem by allowing for multiple parallel \acp{TR}.
Secondly, we change the number of accepted ``failures'' $\tau_{\textrm{fail} }$, such that \thesbo can roughly reach the input dimensionality in a fixed number of evaluations as described in Section~\ref{par:setting_the_initial_target_dimension}.

\subsection{Splitting strategy}\label{subsec:splitting-strategy}
Starting in a low-dimensional embedded space, \thesbo successively grows the target \mbox{dimensionality} to increase the probability of containing an optimum.
By Corollary~\ref{corr:optimality}, it is optimal to keep the number of contributing input dimensions in the target bins as equal as possible.
At each splitting point, the target dimensionality grows exponentially.
The number of splits required to reach some input dimensionality $D$ is logarithmic in $D$.
\thesbo uses a larger evaluation budget in each split.
Suppose that the algorithm starts in target dimensionality~$d_{\textrm{init} }$.
Then, after $k$ splits, the target dimensionality is $d_k=d_{\textrm{init} }(b+1)^k$.

\subsection{Controlling the number of accepted failures}\label{subsec:controlling_fail_tolerance}
\thesbo needs to be able to reach high target dimensionalities to find a global optimum.
As described in Section~\ref{ssec:tr_approach}, \thesbo increases the target dimensionality when the TR base side length falls below the minimum threshold.
For this to happen, the TR base length needs to be halved at least $k=\left \lfloor \log_{\frac{1}{2}} \frac{L_{\textrm{min} }}{L_{\textrm{init}}}\right \rfloor$ times.
Halving occurs if \thesbo consecutively fails $\tau_{\textrm{fail}}$ times in finding a better function value.
If, similarly to \turbo, we set the number of accepted ``failures'' $\tau_{\textrm{fail}}$ to the current target dimensionality of the TR, we get the lower bound $k\cdot \tau_{\textrm{fail}}$ on the number of function evaluations spent in that target dimensionality.
This bound does not scale with the input dimensionality $D$ of the problem, i.e., the maximum target dimensionality is independent of $D$ for a fixed evaluation budget.

To enable \thesbo to reach any desired target dimensionality for the fixed evaluation budget, we scale down $\tau_{\textrm{fail}}$ dependent on $D$, i.e., we adjust the lower bound.
We choose to make it dependent on $D$ as we are guaranteed that the target space contains all global optima if the final target space corresponds to the input space $\mathcal{X}$ (see Appendix~\ref{app:global_convergence}).
In contrast to imposing a hard limit on the number of function evaluations in a target dimensionality, scaling down the number of accepted ``failures'' has the advantage that we do not restrain \thesbo in cases where it finds better function values.
The idea is to choose $\tau_{\textrm{fail}}$ dependent on the current target dimensionality $d_i$ \emph{and} such that \thesbo can reach any desired target dimensionality.

We calculate the number of splits~$n$ required to reach~$D$ by
\begin{equation}
    D \approx d_{\textrm{init}}\cdot{}(b+1)^n \Rightarrow n=\left \lfloor \log_{b+1}\frac{D}{d_{\textrm{init} }}\right \rceil \label{eq:number_of_splits},
\end{equation}
with $\lfloor \cdot \rceil$ indicating rounding to the nearest integer.
The minimum evaluation budget for a split is then found by multiplying~$m_D$ with the ``weight'' of each target dimensionality.
We assign each split~$i$ a \emph{split budget}~$m_i^s$ that is proportional to~$d_i$, such that $\sum_{i=0}^n m^s_i = m_D$, where~$m_D$ is the budgeted number of function evaluations until~$D$ would be reached under the above assumptions:
\[
    m^s_i= \left \lfloor m_D \frac{d_i}{\sum_{k=0}^n d_k} \right \rceil = \left \lfloor \frac{b\cdot{}m_D\cdot{}d_i}{d_{\textrm{init}}((b+1)^{n+1}-1)}\right \rceil.
\]
Finally, we set the number $\tau_{\textrm{fail}}^i$ of accepted ``failures'' for the~$i$-th target dimensionality~$d_i$ such that (1) it adheres to its \emph{split budget} in the event that it never obtains a better function value, (2) it is not larger than if we would use \turbo's choice $d_i$, and (3) it is at least 1:
\[
    \tau_{\textrm{fail} }^i = \max \left (1,\min \left ( \left \lfloor \frac{m_i^s}{k} \right \rfloor, d_i \right )\right ).
\]

\paragraph{Setting the initial target dimension.}\label{par:setting_the_initial_target_dimension}

Due to the rounding in Eq.~\eqref{eq:number_of_splits} and the exponential growth of the target dimensionality, the final target dimensionality $d_{\textrm{init}}\cdot (b+1)^n$ might differ considerably from the input dimensionality $D$.
This is undesirable as we might not reach $D$ before depleting the evaluation budget, or we might overestimate the evaluation budget for the final target dimensionality~$d_n$.
Therefore, we set the initial target dimensionality such that the final target dimensionality is as close to~$D$ as possible:
\begin{equation}
    d_{\textrm{init} } = \argmin_{i\in \{1,\ldots,b\}} \left |i\cdot{}(b+1)^n - D  \right |, \label{eq:dinit}
\end{equation}
where $n$ is given by Eq.~\eqref{eq:number_of_splits}.
We point out that $1 \leq d_{\textrm{init}} \leq b$.
An alternative to adjusting~$d_{\textrm{init}}$ would be to fix the initial~$d_0$ and adjust the growth factor~$b$.

\section{Experimental evaluation}\label{sec:results}
In this section, we evaluate the performance of \thesbo on a 388D hyperparameter optimization task, a 124D design problem, and a collection of tasks that exhibit an active subspace.  
The \thesbo code is available at \url{https://github.com/LeoIV/BAxUS}.

\paragraph{The experimental setup.}
We benchmark against \turbo~\cite{eriksson2019scalable} with one and five trust regions, \saasbo~\cite{saasbo}, \alebo~\cite{alebo}, random search~\citep{bergstra2012random}, \cmaes~\cite{cma-es}, and \hesbo~\cite{nayebi2019a}, using the implementations provided by the respective authors with their settings, unless stated otherwise. 
For \cmaes, we use the \textsc{PyCma}~\cite{hansen2019pycma} implementation.
For \hesbo and \alebo, we use the \textsc{Ax} implementation~\cite{bakshy2018ae}.
To show the effect of different choices of $d$, we run \hesbo and \alebo with $d=10$ and $d=20$.
We observed that \alebo and \saasbo are constrained by their high runtime and memory consumption.
The available hardware allowed up to 100 function evaluations for \saasbo and 500 function evaluations for \alebo for each individual run.
Larger sampling budgets or higher target dimensions for \alebo resulted in out-of-memory errors. 
We point out that limited scalability was expected for these two methods, whereas the other methods scaled to considerably larger budgets, as required for scalable BO.
We initialize each optimizer, including \thesbo, with ten initial samples and \thesbo with $b=3$ and $m_D=1000$ and run 20 repeated trials.
Plots show the mean performance with one standard error. 

\paragraph{The benchmarks.}
We evaluate the selected algorithms on six benchmarks that differ considerably in their characteristics.
Following~\cite{wang2016bayesian}, we augment the \textsc{Branin2} and \textsc{Hartmann6} functions with additional dummy dimensions that have no influence on the function value.
We use the 388D \textsc{Svm} benchmark and the 124D soft-constraint version of the \textsc{Mopta08} benchmark proposed in~\citep{saasbo}.
We set a budget of~1000 evaluations for \textsc{Mopta08}, \textsc{Branin2}, and \textsc{Hartmann6} and of~2000 evaluations for the other benchmarks.
Moreover, we stopped for \textsc{Branin2} and \textsc{Hartmann6} when the simple regret dropped below~$.001$.
We show results on additional noise-free benchmarks in Appendices~\ref{app:lasso-dna} and \ref{app:mujoco}.
We also tested the algorithms on the 300D \textsc{Lasso-High} and the 1000D \textsc{Lasso-Hard} benchmarks from \lassobench~\cite{vsehic2021lassobench}.
These benchmarks have an effective dimensionality of~$5\%$ of the input dimensionality, i.e., the \textsc{Lasso-High} and \textsc{Lasso-Hard} benchmarks have 15 and 50 effective dimensions, respectively.
To study the robustness to observational noise, we also tested on noisy variants of \textsc{Lasso-Hard} and \textsc{Lasso-High}.

\subsection{Experimental results}\label{subsec:experimental-results}

\begin{figure}
\centering
\resizebox{\textwidth}{!}{\includeinkscape{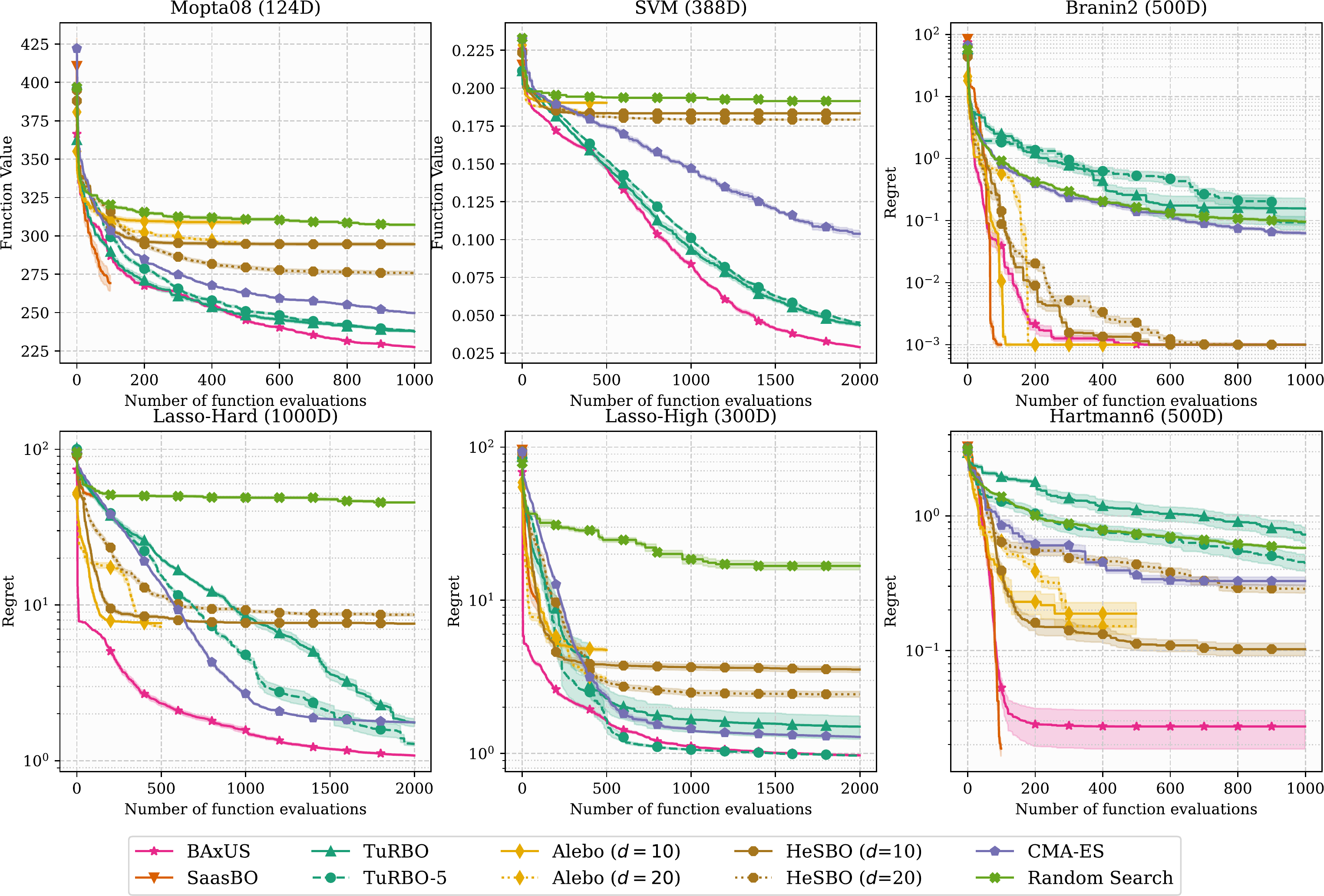_tex}}
\caption{
Top row: \textbf{124D \textsc{Mopta08} (l)}: \thesbo obtains the best solutions, followed by \turbo and \cmaes. \textbf{388D \textsc{Svm} (c)}: \thesbo outperforms the other methods from the start. \textbf{500D \textsc{Branin} (r)}: \saasbo, \thesbo, \alebo, and \hesbo find an optimum; \saasbo and \alebo converge fastest.  Bottom row: \textbf{100D \textsc{Lasso-Hard} (l)} and \textbf{300D \textsc{Lasso-High} (c)}: \thesbo outperforms the baselines. \saasbo, \alebo, and \hesbo struggle. \textbf{500D \textsc{Hartmann6} (r)}: \saasbo performs best, closely followed by \thesbo. The other methods show only slow progress or stagnate.
}
\label{fig:lasso-hard-high}
\end{figure}

We begin with the six noise-free benchmarks.
Fig.~\ref{fig:lasso-hard-high} summarizes the performances.
On \textsc{Mopta08}, a 124D vehicle design problem, \saasbo initially makes slightly faster progress than \thesbo.
We suspect that this benchmark has high effective dimensionality, such that \thesbo first needs to adapt the target dimensionality to make further progress.
On the 388D \textsc{Svm} benchmark, \thesbo adapts to the appropriate target dimensionality where it can reach good function values faster than \turbo and \cmaes.
For this benchmark, \citet{saasbo} reported that \saasbo learned three active dimensions.
Yet, the fact that \alebo and \hesbo seem to stagnate after a few hundred evaluations, while \thesbo, \turbo, and \cmaes find better solutions, indicates that optimizing more of the 385 kernel length scales of the \textsc{Svm} benchmark allows for better solutions.
On the 500D \textsc{Hartmann6}, \saasbo performs best, closely followed by \thesbo. \alebo and \hesbo are competitive initially but converge to suboptimal solutions. \hesbo, \thesbo, \saasbo, \alebo all find excellent solutions on the \textsc{Branin} benchmark, with the latter algorithms converging faster.

Next, we examine the performances on the 1000D \textsc{Lasso-Hard} and the 300D \textsc{Lasso-High} that exhibit active subspaces, here without observational noise.
\thesbo achieves considerably better solutions than all state-of-the-art methods. 
We also note that \turbo and \cmaes perform better than \saasbo and \alebo.  
While one may expect \thesbo to outperform \turbo and \cmaes on these tasks with high input dimensions, it is surprising that \saasbo, \hesbo, and \alebo are not able to benefit from the present active subspace.
Here \thesbo's strategy to adaptively expand the nested subspace is superior.
Another crucial observation is that performances of \thesbo vary only slightly across runs.
Thus, \thesbo is robust despite the stochastic construction of the embedding.
Across the broad collection of benchmarks, \thesbo is the only method to consistently achieve high performance.

\paragraph{Noisy benchmarks.}
We evaluate the algorithms also for tasks with observational noise. 
Fig.~\ref{fig:lasso_hard_high_noisy} summarizes the results.
We observe that \thesbo achieves considerably better solutions for any number of observations than the competitors.
Moreover, we note that the performances of \saasbo, \cmaes, and \hesbo ($d=20$) degrade considerably on the \textsc{Lasso-High} task compared to the noise-free formulation of the task studied above. 
\thesbo' performance is equally strong as for the noise-free case and keeps making progress after 1000 observations.
\begin{figure}
    \centering
    \resizebox{ .85\linewidth}{!}{\includeinkscape{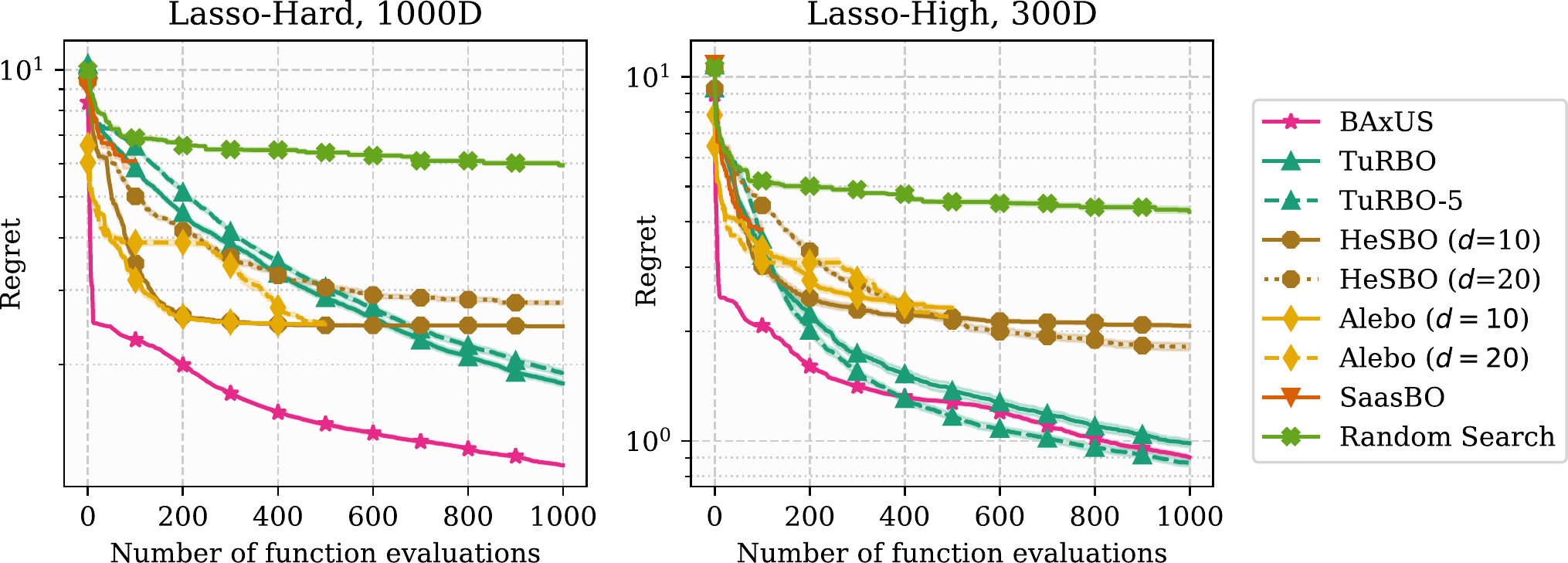_tex}}
    \caption{\thesbo outperforms the SOTA and in particular proves to be robust to observational noise on the \textbf{1000D \textsc{Lasso-Hard} (l)} and the \textbf{300D \textsc{Lasso-High} (r)}. 
    Note that \cmaes performs considerably worse than on the noise-free versions of the benchmarks.
    }
    \label{fig:lasso_hard_high_noisy}
\end{figure}

\paragraph{\thesbo embedding ablation study.}
To investigate whether the proposed family of nested random subspaces contributes to the superior performance of \thesbo, we replaced the new embedding with a similar family of nested \hesbo embeddings. 
The results show that the proposed embedding provides a significant performance gain.
Due to space constraints, the results were moved to Appendix~\ref{app:ablation_axus}.

\section{Discussion}\label{sec:discussion}
\Acl{HDBO} is aspiring to unlock impactful applications broadly in science and industry.
However, state-of-the-art methods suffer from limited scalability or, in some cases, require practitioners to `guess' certain hyperparameters that critically impact the performance.
This paper proposes \thesbo that works out-of-the-box and achieves considerably better performance for high-dimensional problems, as the comprehensive evaluation shows.
A key idea is to scale up the dimensionality of the target subspace that the algorithm optimizes over.
We apply a simple strategy that we find to work well across the board. 
However, we expect substantial headroom in tailoring this strategy to specific applications, either using domain expertise or a more sophisticated data-driven approach that, for example, learns a suitable target space.
Moreover, future work will explore extending \thesbo to structured domains, particularly the combinatorial spaces common in materials sciences and drug discovery.

\paragraph{Societal impact.}
Bayesian optimization has recently become a popular tool for tasks in drug discovery~\cite{negoescu2011the}, chemical engineering~\cite{lobato2017parallel,shields2021bayesian,hase2018phoenics,schweidtmann2018machine,burger2020mobile}, materials science~\cite{UENO201618,Frazier2016,packwood2017bayesian,souza2019deepfreak,herbol2018efficient,hase2021gryffin}, aerospace engineering~\cite{lukaczyk2014active,baptista2018bayesian,lam2018advances}, robotics~\cite{lizotte2007automatic,calandra2016bayesian,rai2018bayesian,tuprints5878,mayr2022skill},  and many more. 
This speaks to the progress that Bayesian optimization has made in becoming a robust and reliable `off-the-shelf solver.' 
However, this promise is not yet fulfilled for the newer field of high-dimensional Bayesian optimization that allows optimization over hundreds of `tunable levers.'
The abovementioned applications benefit from incorporating more such levers in the optimization: it allows for more detailed modeling of an aerospace design or a more granular control of a chemical reaction, to give some examples.
The evaluation shows that the performance of state-of-the-art methods degenerates drastically for such high dimensions if the application does not meet specific requirements. 
Adding insult to the injury, such requirements as the dimensionality of an active subspace cannot be determined beforehand.

The proposed algorithm achieves a robust performance over a broad collection of tasks and thus will become a `goto' optimizer for practitioners in other fields. 
Therefore, we released the \thesbo code.

\begin{ack}
Luigi Nardi was supported in part by affiliate members and other supporters of the Stanford DAWN project — Ant Financial, Facebook, Google, Intel, Microsoft, NEC, SAP, Teradata, and VMware.
Leonard Papenmeier and Luigi Nardi were partially supported by the Wallenberg AI, Autonomous Systems and Software Program (WASP) funded by the Knut and Alice Wallenberg Foundation. 
Luigi Nardi was partially supported by the Wallenberg Launch Pad (WALP) grant Dnr 2021.0348. 
The computations were also enabled by resources provided by the Swedish National Infrastructure for Computing (SNIC) at LUNARC, partially funded by the Swedish Research Council through grant agreement no. 2018-05973.
We would like to thank Erik Hellsten from Lund University and Eddy de Weerd from the University of Twente for valuable discussions.
\end{ack}

\newpage

\bibliography{references}

\clearpage
\section*{Checklist}

\begin{enumerate}
\item For all authors...
\begin{enumerate}
  \item Do the main claims made in the abstract and introduction accurately reflect the paper's contributions and scope?
    \answerYes{}
  \item Did you describe the limitations of your work?
    \answerYes{} See Section~\ref{sec:discussion}.
  \item Did you discuss any potential negative societal impacts of your work?
    \answerYes{} See Section~\ref{sec:discussion}.
  \item Have you read the ethics review guidelines and ensured that your paper conforms to them?
    \answerYes{}
\end{enumerate}

\item If you are including theoretical results...
\begin{enumerate}
  \item Did you state the full set of assumptions of all theoretical results?
    \answerYes{}
        \item Did you include complete proofs of all theoretical results? 
    \answerYes{} See Appendix~\ref{sec:proofs_embedding}.
\end{enumerate}

\item If you ran experiments...
\begin{enumerate}
  \item Did you include the code, data, and instructions needed to reproduce the main experimental results (either in the supplemental material or as a URL)?
    \answerYes{}
  \item Did you specify all the training details (e.g., data splits, hyperparameters, how they were chosen)?
    \answerYes{}
        \item Did you report error bars (e.g., with respect to the random seed after running experiments multiple times)?
    \answerYes{}
        \item Did you include the total amount of compute and the type of resources used (e.g., type of GPUs, internal cluster, or cloud provider)? 
    \answerYes{} See Appendix~\ref{append:implementation_details}.
\end{enumerate}

\item If you are using existing assets (e.g., code, data, models) or curating/releasing new assets...
\begin{enumerate}
  \item If your work uses existing assets, did you cite the creators?
    \answerYes{} See Appendix~\ref{append:implementation_details}.
  \item Did you mention the license of the assets?
    \answerYes{} See Appendix~\ref{append:implementation_details}.
  \item Did you include any new assets either in the supplemental material or as a URL?
    \answerYes{}
  \item Did you discuss whether and how consent was obtained from people whose data you're using/curating?
    \answerNA{}
  \item Did you discuss whether the data you are using/curating contains personally identifiable information or offensive content?
    \answerNA{}
\end{enumerate}

\item If you used crowdsourcing or conducted research with human subjects...
\begin{enumerate}
  \item Did you include the full text of instructions given to participants and screenshots, if applicable?
    \answerNA{}
  \item Did you describe any potential participant risks, with links to Institutional Review Board (IRB) approvals, if applicable?
    \answerNA{}
  \item Did you include the estimated hourly wage paid to participants and the total amount spent on participant compensation?
    \answerNA{}
\end{enumerate}

\end{enumerate}

\newpage
\appendix

\section{Theoretical foundation for the \thesbo embedding}\label{sec:proofs_embedding}
For convenience, we re-state Definition~\ref{def:count-sketch-type} and Definition~\ref{def:success} from Section~\ref{sec:algorithm}.

\newtheorem*{definition1}{Definition~\ref{def:count-sketch-type}}
\begin{definition1}[\textbf{\Sparseembedding matrix}]
    
\end{definition1}

\newtheorem*{definition2}{Definition~\ref{def:success}}
\begin{definition2}[\textbf{Success of a \sparseembedding}]
    
\end{definition2}

We introduce the following two definitions.

\begin{definition}[\textbf{Optima-preserving \sparseembedding}]
    A \sparseembedding matrix is \emph{optima-preserving} if each target dimension (i.e., each column in $S$) contains at most one active input dimension.
\end{definition}

\begin{definition}[\textbf{Sparse function / function with an active subspace}]
    \label{def:function-with-active-subspace}
    Let $\mathcal{X} = [-1,1]^D$.
    A function $f:\mathcal{X}\rightarrow \mathbb{R}$ has an \emph{active subspace} (or \emph{effective subspace}~\cite{wang2016bayesian}), if there exist a subspace (i.e., a space $\mathcal{Z}\subseteq \mathbb{R}^{d_e}$, with $d_e\leq D$ where $d_e\in \mathbb{N}_{++}$ is the effective dimensionality and $\mathbb{N}_{++} = \mathbb{N}\setminus \{0\}$) and a projection matrix $S^\T \in \mathbb{R}^{D\times d_e}$, such that for any $\bm{x}\in \mathcal{X}$ there exists a $\bm{z}\in \mathcal{Z}$ so that $f(\bm{x}) = f(S^\T\bm{z})$ and $d_e$ is the smallest integer with this property.
    The function is called \textit{sparse} if it has an active subspace and $S^\T$ is a \sparseembedding matrix and $\mathcal{Z}=[-1,1]^{d_e}$.
\end{definition}

\subsection{Proof of Theorem~\ref{lemma:ft_embedding}}\label{proof:lemma_ft_embedding}
We prove the worst-case success probability for the \thesbo embedding.

\newtheorem*{lemma1}{Theorem~\ref{lemma:ft_embedding}}
\begin{lemma1}[\textbf{Worst-case success probability of the \thesbo embedding}]
    
\end{lemma1}
\begin{proof}

    The assignment of input dimensions to target dimensions and the signs of the input dimensions fully define the \thesbo embedding.
    Note that the signs do not affect $p_B(Y^*; D,d,d_e)$ because they only correspond to ``flipping'' the input dimension in the target space, and our construction ensures that the value ranges are symmetric to the origin.

    An assignment is optima-preserving if and only if it is possible to find a point in $\mathcal{Y}$ that maps to an optimum in $\mathcal{X}$ for any $f$.
    The ``only if'' is true because $f$ is assumed to be sparse with an active subspace with $d_e$ active dimensions.
    This means that the optima in $\mathcal{X}$ only change their function values along the $d_e$ active dimensions.
    Suppose it is possible to find a point in $\mathcal{Y}$ that maps to an arbitrary optimum in $\mathcal{X}$.
    In that case, the assignment is optima-preserving because it can individually adjust all the $d_e$ active dimensions in $\mathcal{X}$.
    However, this generally requires each active input dimension to be mapped to a distinct target dimension (note that we require being able to represent the optimum for \emph{any} $f$).
    Otherwise, there would be at least two active input dimensions that cannot be changed independently.
    Therefore, the probability of $Y^*$ equals the probability of an optima-preserving assignment.

    As all assignments are equally likely under the construction, the probability of an assignment being optima-preserving is equal to the number of possible optima-preserving assignments divided by the total number of assignments.
    There are $\binom{D}{d_e}$ ways of distributing the $d_e$ active dimensions across the $D$ positions, giving the denominator in Eq.~\eqref{eq:binning}.

    Let us first assume that $\beta_{\textrm{small} } = \beta_{\textrm{large} } = \beta$, i.e., all target dimensions have the same number of input dimensions and $d$ divides $D$.
    We refer to this case as the \emph{balanced case}.
    There are $\binom{d}{d_e}$ ways of distributing the $d_e$ active dimensions across the $d$ different target dimensions.
    Given one active dimension, there are $\beta$ ways in which this dimension can map to the target dimension.
    Therefore, for the balanced case, the worst-case success probability is given by
    \begin{align}
        p_B(Y^*; D,d,d_e)=\frac{\beta^{d_e}\binom{d}{d_e}}{\binom{D}{d_e}}.
        \label{eq:d_divides_D_lemma}
    \end{align}
    Next, we generalize Eq.~\eqref{eq:d_divides_D_lemma} for cases where $d$ does not divide $D$.
    We refer to this case as the \emph{near-balanced case}.
    In that case, there are two bin sizes: $\beta_{\textrm{small} }$ and $\beta_{\textrm{large} }$ with $\beta_{\textrm{large} } = \beta_{\textrm{small} }+1$.
    There are $d\beta_{\textrm{large} }-D$ small bins (i.e., bins with bin size $\beta_{\textrm{small} }$) and $D-d\beta_{\textrm{small} }$ large bins:
    $D-d\beta_{\textrm{small}}$ gives the number of input dimensions that would not be covered if all bins were small.
    Since $\beta_{\textrm{small} }$ and $\beta_{\textrm{large}}$ differ by $1$, this also gives the number of bins that have to be large.
    Conversely, if we only had large bins, we would cover $d\beta_{\textrm{large} }-D$ too many input dimensions.
    Therefore, we need $D-d\beta_{\textrm{small}}$ large and $d\beta_{\textrm{large} }-D$ small bins. %

    We consider all ways of distributing the $d_e$ active dimensions across the the $d\beta_{\textrm{large} }-D$ small and $D-d\beta_{\textrm{small}}$ large bins so that there is at most one active dimension in each bin. 
    Recall that this number gives the numerator in Eq.~\eqref{eq:binning}.
    For a conflict-free assignment, if~$i$ active dimensions are mapped to small bins, then~$d_e-i$ active dimensions must be assigned to large bins. 
    There are $\binom{d(1+\beta_{\textrm{small} })-D}{i}\binom{D-d\beta_{\textrm{small} }}{d_e-i}$ such assignments. Here we use that~$1+\beta_{\textrm{small}} = \beta_{\textrm{large}}$ holds for the near-balanced case. 
    Recall that each small bin has~$\beta_{\textrm{small} }$ locations and that each large bin has~$\beta_{\textrm{large}}$ locations that an active dimension can be assigned to.
    Because~$0 \leq i \leq d_e$ by construction, the number of assignments that result in an optima-preserving embedding is
    \begin{equation*}
      \sum_{i=0}^{d_e}\binom{d(1+\beta_{\textrm{small} })-D}{i}\binom{D-d\beta_{\textrm{small} }}{d_e-i}\beta_{\textrm{small} }^i\beta_{\textrm{large} }^{d_e-i}.  
    \end{equation*}
    Note that we leverage the facts  $\binom{0}{0}=1$, $\binom{0}{x} = 0$ for all $x \geq 1$, $\binom{y}{x}=0$ if $x>y\geq 0$, and $\binom{x}{0}=1$ for all $x$, thus the sum is well defined.
    Recall that we already showed that the denominator is $\binom{D}{d_e}$.
    Therefore, Eq.~\eqref{eq:binning} gives the success probability in the near-balanced case.

    It is easy to see that Eq.~\eqref{eq:binning} is equivalent to the near-balanced formulation in Eq.~\eqref{eq:d_divides_D_lemma} when $d$ divides $D$.
    When $d$ divides $D$, $\beta_{\textrm{small} } = \beta_{\textrm{large} } = \beta$, $d(1+\beta_{\textrm{small} })-D=d$, and $D-d\beta_{\textrm{small} }=0$.
    Therefore, the worst-case success probability for the near-balanced case is given by
    \begin{align*}
        p_B(Y^*; D,d,d_e)&=\frac{\sum_{i=0}^{d_e}\binom{d(1+\beta_{\textrm{small} })-D}{i}\binom{D-d\beta_{\textrm{small} }  }{d_e-i}\beta_{\textrm{small} }^i\beta_{\textrm{large} }^{d_e-i}}{\binom{D}{d_e}}\\
        &\overset{\beta_{\textrm{small} }=\beta_{\textrm{large} }}{=} \frac{\sum_{i=0}^{d_e}\binom{d}{i}\binom{0}{d_e-i}\beta^i\beta^{d_e-i}}{\binom{D}{d_e}}\\
        &= \frac{\beta^{d_e}\sum_{i=0}^{d_e}\binom{d}{i}\binom{0}{d_e-i}}{\binom{D}{d_e}}= \frac{\beta^{d_e}\binom{d}{d_e}}{\binom{D}{d_e}}
    \end{align*}
    where the last equality is true because the sum is zero unless $i=d_e$.\\
\end{proof}

\subsection{Proof of Corollary~\ref{corr:optimality}}\label{proof:optimality}
We prove the optimality of the \thesbo embedding in terms of the worst-case success probability.
\newtheorem*{corr2}{Corollary~\ref{corr:optimality}}
\begin{corr2}
    
\end{corr2}

\begin{proof}
    By Definition~\ref{def:count-sketch-type}, an embedding matrix $S\in \{0,\pm 1\}^{D\times d}$ is sparse if each row in $S$ has exactly one non-zero entry.
    Such an embedding can always be interpreted as disjoint sets of signed input dimensions assigned to different target dimensions: For the $n$-th input dimension, find the column with the non-zero.
    The respective column gives the target dimension;
    the entry in the matrix itself gives the sign.
    Conversely, each target dimension has a set of contributing input dimensions, and we call the set of input dimensions mapping to a target dimension a ``bin''.
    The sign of the input dimensions does not influence the success probability as it does not influence the ability of an embedding to contain the optimum.

    We will prove that the \thesbo embedding is optimal, i.e., every other \sparseembedding has a worst-case success probability that is lower or equal.
    We start by giving the worst-case success probability for arbitrary bin sizes.

    Let $\beta_n$ be the bin size of the $n$-th bin.
    By Definition~\ref{def:success}, a success is guaranteed if each bin contains at most one active input dimension.
    Therefore, the worst-case success probability for arbitrary bin sizes has to consider the number of cases where each bin contains at most one active input dimension and the number of bins containing one active input dimension is equal to the number of active input dimensions $d_e$.
    In a bin of size $\beta_i$, the active input dimension can lie in $\beta_i$ different locations.
    Bins not containing an active dimension do not contribute to the worst-case success probability.

    We suppose w.l.o.g.\ that $D\geq d$. Thus, every target dimension has at least one input dimension.
    For each $n$ from $1$ to $d$, let the value $i_n$ indicate whether the $n$-th bin (or target dimension) contains an active dimension ($i_n=1$) or not ($i_n=0$).
    The indicator variable $\mathds{1}_{(\sum_{n=1}^d i_n) = d_e}$ ensures that only cases where exactly $d_e$ bins contain an active input dimension are counted.
    Note that $\sum_{i_1=0}^{1}\sum_{i_2=0}^{1}\ldots \sum_{i_{d}=0}^{1} \mathds{1}_{(\sum_{n=1}^d i_n) = d_e} = \binom{d}{d_e}$.
    For each case where the $d_e$ active dimensions are assigned to $d_e$ out of $d$ disjoint bins, the term $\prod_{n=1}^{d}\beta_n^{i_n}$ accounts for the locations in which the active dimension can lie in the $n$-th bin.
    Other cases do not contribute to the worst-case success probability.
    The exponent ensures that only bins containing an active dimension contribute to the denominator.

    Then the worst-case success probability for arbitrary bin sizes is given by
    \begin{equation}
        p_{\textrm{general} }(Y^*;D,d,d_e)=\frac{\overbrace{\sum_{i_1=0}^{1}\sum_{i_2=0}^{1}\ldots \sum_{i_{d}=0}^{1} \mathds{1}_{(\sum_{n=1}^d i_n) = d_e}}^{=\binom{d}{d_e}} {\displaystyle \prod_{n=1}^{d}\beta_n^{i_n}}}{\binom{D}{d_e}} \label{eq:general_failure_probability},
    \end{equation}
    with $\beta_n > 0$, $\sum_{n=1}^d \beta_n = D$, and $d\geq d_e$:

    As in Theorem~\ref{lemma:ft_embedding}, the denominator of Eq.~\eqref{eq:general_failure_probability} gives all ways of assigning $d_e$ active dimensions to $D$ input dimensions.

    We now prove that any \sparseembedding has a worst-case success probability that is less or equal to the worst-case success probability of the \thesbo embedding.

    Let $\beta_{\textrm{small} }$, $\beta_{\textrm{large} }$, and $p_B(Y^*; D,d,d_e)$ as in Theorem~\ref{lemma:ft_embedding}.
    Then,

    \begin{equation}
        p_{\textrm{general} }(Y^*;D,d,d_e) \leq  p_B(Y^*; D,d,d_e) = \frac{\overbrace{\sum_{i=0}^{d_e}\binom{d(1+\beta_{\textrm{small} })-D}{i}\binom{D-d\beta_{\textrm{small} }  }{d_e-i}}^{=\binom{d}{d_e}}\beta_{\textrm{small} }^i\beta_{\textrm{large} }^{d_e-i}}{\binom{D}{d_e}}.  \label{eq:optimality_proof_line}
    \end{equation}

    We refer the reader to the proof of Theorem~\ref{lemma:ft_embedding} for an explanation of the binomial coefficients.
    The fact that $\sum_{i=0}^{d_e}\binom{d(1+\beta_{\textrm{small}})-D}{i}\binom{D-d\beta_{\textrm{small}}}{d_e-i} = \binom{d}{d_e}$ can be seen by noting that $(d(1+\beta_{\textrm{small}})-D) + (D-d\beta_{\textrm{small}}) = d$ and applying Vandermonde's convolution~\cite{graham1989concrete}.

    We will now prove that if $d$ divides $D$, then the product in the numerator of Eq.~\eqref{eq:general_failure_probability} is maximized if all the factors are the same, i.e., $\beta=\frac{D}{d}$.
    We will then show that if $d$ does not divide $D$, the integer-solution of maximal value is attained for $\beta_{\textrm{large} }-\beta_{\textrm{small} }=1$.

    \paragraph{First case ($d$ divides $D$)}
    We now show that the following holds for the term $\prod_{n=1}^{d}\beta_n^{i_n}$ in the numerator of Eq.~\eqref{eq:general_failure_probability}: $\prod_{n=1}^{d}\beta_n^{i_n}\leq \beta^{d_e}$.
    The numerator in Eq.~\eqref{eq:general_failure_probability} can also be written as $e_{d_e}(\beta_1,\ldots,\beta_d)$ where
    \begin{align*}
        e_{d_e}(\beta_1,\ldots,\beta_d)&=\sum_{i_1<i_2<\ldots<i_{d_e}}\beta_{i_1}\beta_{i_2}\ldots\beta_{i_{d_e}}\\
        &=\sum_{i_1=0}^{1}\sum_{i_2=0}^{1}\ldots \sum_{i_{d}=0}^{1} \mathds{1}_{(\sum_{n=1}^d i_n) = d_e}{\displaystyle \prod_{n=1}^{d}\beta_n^{i_n}}
    \end{align*}
    is the $d_e$-th elementary symmetric function of $\beta_1,\ldots,\beta_d$~\cite{beckenbach2012inequalities}.
    Maclaurin's inequality~\cite{beckenbach2012inequalities} states that
    \begin{align}
        \frac{e_1(\beta_1,\ldots,\beta_d)}{\binom{d}{1}}\geq \sqrt{\frac{e_2(\beta_1,\ldots,\beta_d)}{\binom{d}{2}}} \geq \ldots \geq \sqrt[d_e]{\frac{e_{d_e}(\beta_1,\ldots,\beta_d)}{\binom{d}{d_e}}} \geq \ldots \geq \sqrt[d]{\frac{e_d(\beta_1,\ldots,\beta_d)}{\binom{d}{d}}}. \label{eq:maclaurin}
    \end{align}
    In particular,
    \begin{align}
        \frac{e_1(\beta_1,\ldots,\beta_d)}{\binom{d}{1}} = \frac{\sum_{i=1}^d\beta_i}{d}=\frac{D}{d}=\beta \label{eq:maclaurin_1}
    \end{align}
    holds.
    Taking Eq.~\eqref{eq:maclaurin} and Eq.~\eqref{eq:maclaurin_1} to the power $d_e$ and multiplying by $\binom{d}{d_e}$, we obtain
    \begin{equation}
        \beta^{d_e}\binom{d}{d_e}\geq e_{d_e}(\beta_1,\ldots,\beta_d) = \sum_{i_1=0}^{1}\sum_{i_2=0}^{1}\ldots \sum_{i_{d}=0}^{1} \mathds{1}_{(\sum_{n=1}^d i_n) = d_e}{\displaystyle \prod_{n=1}^{d}\beta_n^{i_n}} \label{eq:elementary_symmetric_function},
    \end{equation}
    with equality if and only if $\beta_i=\beta_j$ for $i,j\in \{1,\ldots,n\}$~\cite{beckenbach2012inequalities}.
    Therefore, the product in the numerator of Eq.~\eqref{eq:general_failure_probability} is maximized if all factors are equal.

    \paragraph{Second case ($d$ does not divide $D$)}
    However, if $d$ does not divide $D$, then $\beta$ is no integer which is not feasible in our setting.
    The $d_e$-th elementary symmetric function $e_{d_e}(\bm{\beta})$ (see Eq.~\eqref{eq:elementary_symmetric_function}) is known to be \emph{Schur-concave} if $\beta_i\geq 0$ holds for all $i$~\cite{rovencta2012note}.
    This condition is met by~$\beta$.
    We use the following definition of~\cite{rovencta2012note}: A function $f: \mathbb{R}^d \rightarrow \mathbb{R}$ is called Schur-concave if $\bm{\gamma} \prec \bm{\beta}$ implies $f(\bm{\gamma})\geq f(\bm{\beta})$.
    Here, $\bm{\gamma} \prec  \bm{\beta}$ means that $\bm{\beta}$ \emph{majorizes} $\bm{\gamma}$, i.e.,
    \[
        \sum_{i=1}^k \bm{\gamma}^{\downarrow}_i \leq \sum_{i=1}^k \bm{\beta}^{\downarrow}_i\quad \text{for all $k\in \{1,\ldots,d\}$},\quad \text{and}\quad \sum_{i=1}^d \bm{\gamma}_i = \sum_{i=1}^d \bm{\beta}_i,
    \]
    where $\bm{\gamma}^{\downarrow}$ and $\bm{\beta}^{\downarrow}$ are the vectors of all elements in $\bm{\gamma}$ and $\bm{\beta}$ in descending order~\cite{rovencta2012note}.

    We now show that there is no integer solution $\bm{\gamma}$ such that there is a near-balanced solution that majorizes $\bm{\gamma}$.

    For some near-balanced assignment $\bm{\beta}$ of small and large bins to the $d$ target dimensions, consider the vector
    \[
        \bm{\beta}^{\downarrow} = \left ( \underbrace{\overbrace{\left \lceil \frac{D}{d} \right \rceil}^{=\beta_{\textrm{large} }}, \ldots, \left \lceil \frac{D}{d} \right \rceil}_{D-d\beta_{\textrm{small} }\text{ many}}, \underbrace{\overbrace{\left \lfloor \frac{D}{d} \right \rfloor}^{=\beta_{\textrm{small} }}, \ldots, \left \lfloor \frac{D}{d} \right \rfloor}_{d\beta_{\rm large}-D\text{ many}} \right )
    \]
    of bin sizes in decreasing order.
    For any other \thesbo embedding given by some permutation $\bm{\beta}'$ of $\bm{\beta}$, it holds that $\bm{\beta}'^{\downarrow} = \bm{\beta}^{\downarrow}$.
    Note that for any assignment $\bm{\gamma} = \{\bm{\gamma}_1, \ldots, \bm{\gamma}_d\}$ of bin sizes over the $d$ target dimensions, it has to hold that
    \[
        \sum_{i=1}^d \bm{\gamma}_i = D;\, \bm{\gamma}_i \geq 0,\, i \in \{1,\ldots,d\};\, \bm{\gamma}\in \mathbb{N}_0^{d}.
    \]
    By assumption, since we are in the near-balanced case, $\beta_{\textrm{small} }=\beta_{\textrm{large} }-1$.

    Assume there exists an assignment of bin sizes $\bm{\gamma}$ that is not a permutation of $\bm{\beta}$ such that $\bm{\gamma} \prec \bm{\beta}$, i.e.,
    \begin{align}
        &\sum_{i=1}^k \bm{\gamma}^{\downarrow}_i \leq \sum_{i=1}^k \bm{\beta}^{\downarrow}_i\quad \text{for all $k\in \{1,\ldots,d\}$},\\
        \intertext{and}
        &\sum_{i=1}^d \bm{\gamma}_i = \sum_{i=1}^d \bm{\beta}_i,
        \intertext{and}
        &\exists j: \bm{\gamma}^{\downarrow}_j < \bm{\beta}^{\downarrow}_j.
    \end{align}
    Let $\bm{\kappa}$ denote the (non-empty) set of such indices.
    Because the elements of $\bm{\beta}$ and $\bm{\gamma}$ both sum up to $D$, it has to hold for all $\kappa\in \bm{\kappa}$ that
    \begin{align}
        \sum_{i=1, i\neq \kappa}^d \bm{\gamma}^{\downarrow}_i > \sum_{i=1, i\neq \kappa}^d \bm{\beta}^{\downarrow}_i. \label{eq:schur-remaining-sum}
    \end{align}

    Remember that $\bm{\beta}$ only contains elements of sizes $\beta_{\rm small}$ and $\beta_{\rm large}$ with $\beta_{\rm large} = \beta_{\rm small}-1$.
    Then, Eq.~\eqref{eq:schur-remaining-sum} can only hold if either 1) $\bm{\gamma}$ contains more elements of size $\beta_{\rm large}$ than $\bm{\beta}$ or 2) if it contains at least one element that is larger than $\beta_{\rm large}$, the largest element in $\bm{\beta}$.

    Both cases lead to a contradiction.
    In the first case,
    \[
        \sum_{i=1}^{D-d\beta_{\rm small}+1}\bm{\gamma}^{\downarrow}_i > \sum_{i=1}^{D-d\beta_{\rm small}+1}\bm{\beta}^{\downarrow}_i \Rightarrow  \bm{\gamma} \not \prec \bm{\beta}
    \]
    since at least the first $D-d\beta_{\rm small}+1$ elements of $\bm{\gamma}^{\downarrow}$ are $\left \lceil \frac{D}{d} \right \rceil$ but only the first $D-d\beta_{\rm small}$ elements of $\bm{\beta}^{\downarrow}$ are $\left \lceil \frac{D}{d} \right \rceil$ and the $D-d\beta_{\rm small}+1$-th element of $\bm{\beta}^{\downarrow}$ is $\left \lceil \frac{D}{d}\right \rceil-1$.

    In the second case, $\bm{\gamma} \not \prec \bm{\beta}$ because $\bm{\gamma}^{\downarrow}_1 > \bm{\beta}^{\downarrow}_1$.
    It follows that no such $\bm{\gamma}$ exists.
    Therefore, the \thesbo embedding has a maximum worst-case success probability among sparse embeddings.
\end{proof}

\subsection{Proof of Corollary~\ref{corr:equivalence}}\label{proof:equivalence}

\newtheorem*{corr1}{Corollary~\ref{corr:equivalence}}
\begin{corr1}
    
\end{corr1}
\begin{proof}
By Corollary~\ref{corr:equivalence}, the following holds for arbitrary $1\leq d_e \leq d \leq D$ where $d,d_e,D\in \mathbb{N}_{++}$:
\[
    \frac{d!}{(d-d_e)!d^{d_e}} \leq p_B(Y^*; D, d ,d_e),
\]
because $\frac{d!}{(d-d_e)!d^{d_e}}$ is \hesbo's worst-case success probability and hence less or equal to the worst-case success probability of \thesbo.

Furthermore, by the proof of Corollary~\ref{corr:equivalence},
\[
    p_B(Y^*; D, d ,d_e) \leq \frac{\beta^{d_e}\binom{d}{d_e}}{\binom{D}{d_e}},
\]
because $\frac{\beta^{d_e}\binom{d}{d_e}}{\binom{D}{d_e}}$ is larger or equal the worst-case success probability of any \sparseembedding, among which \thesbo is the embedding with maximum worst-case success probability and $\frac{\beta^{d_e}\binom{d}{d_e}}{\binom{D}{d_e}} = p_B(Y^*; D, d ,d_e)$ if and only if $\beta_{\rm small}=\beta_{\rm large}$, i.e., $d$ divides $D$.

In summary, we have
\[
    \frac{d!}{(d-d_e)!d^{d_e}} \leq p_B(Y^*; D, d ,d_e) \leq \frac{\beta^{d_e}\binom{d}{d_e}}{\binom{D}{d_e}}.
\]

We now show that, for fixed $d$ and $d_e$, the sequences $\frac{d!}{(d-d_e)!d^{d_e}}$ and $\frac{\beta^{d_e}\binom{d}{d_e}}{\binom{D}{d_e}}$ converge to the same point as $D\rightarrow \infty$.
We consider $\lim_{\beta \rightarrow \infty}$, which is equivalent to $\lim_{D\rightarrow \infty}$ as $\beta = \frac{D}{d}$ and $d$ is fixed.
Note, that we can consider $\beta = \frac{D}{d}$ even though it is not a valid success probability when $d$ does not divide $D$, since we are only interested in bounding the true success probability.
Then,
\begin{align}
    \lim_{\beta \rightarrow \infty} \frac{\beta^{d_e}\binom{d}{d_e}}{\binom{D}{d_e}}
    &= \lim_{\beta \rightarrow \infty} \frac{\beta^{d_e}d!(D-d_e)!}{D!(d-d_e)!} \\
    &=\lim_{\beta \rightarrow \infty} \frac{\beta^{d_e}d!(\beta d-d_e)!}{(\beta d)!(d-d_e)!}\\
    &= \lim_{\beta \rightarrow \infty} \beta^{d_e}\frac{d!}{(d-d_e)!}\frac{(\beta d - d_e)!}{(\beta d)!}
    \intertext{Applying Stirling's approximation~\cite{graham1989concrete} to the numerator and the denominator of the last factor, we obtain}
    &= \lim_{\beta \rightarrow \infty} \beta^{d_e} \frac{d!}{(d-d_e)!}  \frac{\sqrt{2\pi(\beta d-d_e)}\left ( \frac{\beta d -d_e}{e} \right )^{\beta d-d_e}}{\sqrt{2\pi \beta d}\left ( \frac{\beta d}{e}\right )^{\beta d}}\frac{r(\beta d-d_e)}{r(\beta d)}\\
    &= \lim_{\beta \rightarrow \infty} \beta^{d_e} \frac{d!}{(d-d_e)!} \sqrt{\frac{\beta d-d_e}{\beta d}}e^{d_e}\frac{(\beta d-d_e)^{\beta d-d_e}}{(\beta d)^{\beta d}}\frac{r(\beta d-d_e)}{r(\beta d)}\\
    &= \lim_{\beta \rightarrow \infty} \beta^{d_e} \frac{d!}{(d-d_e)!} \sqrt{\frac{\beta d -d_e}{\beta d}}e^{d_e} \left ( \frac{\beta d-d_e}{\beta d} \right )^{\beta d} \frac{1}{(\beta d-d_e)^{d_e}}\frac{r(\beta d-d_e)}{r(\beta d)}\\
    &= \lim_{\beta \rightarrow \infty}  \frac{d!}{(d-d_e)!} \sqrt{\frac{\beta d-d_e}{\beta d}}e^{d_e} \left ( \frac{\beta d-d_e}{\beta d} \right )^{\beta d} \frac{\beta^{d_e}}{(\beta d-d_e)^{d_e}} \frac{r(\beta d-d_e)}{r(\beta d)}\\
    &= \lim_{\beta \rightarrow \infty}  \frac{d!}{(d-d_e)!} \underbrace{\sqrt{\frac{\beta d-d_e}{\beta d}}}_{\rightarrow 1}e^{d_e} \underbrace{\left ( \frac{\beta d-d_e}{\beta d} \right )^{\beta d}}_{\rightarrow e^{-d_e}} \underbrace{\left ( \frac{1}{d} \right )^{d_e}}_{=d^{-d_e}} \underbrace{\left ( \frac{\beta d}{\beta d-d_e} \right )^{d_e}}_{\rightarrow 1} \underbrace{\frac{r(\beta d-d_e)}{r(\beta d)}}_{\rightarrow 1}\\
    &= \frac{d!}{(d-d_e)!d^{d_e}}
\end{align}

where the following holds for the error term $r(x)$ of the Stirling approximation~\cite{robbins1955remark}:
\[
    \exp \left ( \frac{1}{12x+1} \right ) \leq r(x) \leq  \exp \left ( \frac{1}{12x} \right ).
\]

Then, $\frac{r(\beta d-d_e)}{r(\beta d)} \rightarrow 1$ for $\beta \rightarrow \infty$ holds since
\begin{align}
    \frac{r(\beta d-d_e)}{r(\beta d)} &\leq \exp \left ( \frac{1}{12(\beta d-d_e)}-\frac{1}{12\beta d+1} \right )\\
    &= \exp \left ( \frac{12d_e+1}{12^2\beta^2d^2+12\beta d -12^2\beta d d_e -12d_e } \right ) \\
    &= \exp \left (  \frac{d_e+\frac{1}{12}}{\beta d(12\beta d+1-12d_e)-d_e} \right ),
\end{align}
and
\begin{align}
    \frac{r(\beta d-d_e)}{r(\beta d)} &\geq \exp \left ( \frac{1}{12(\beta d-d_e)+1}-\frac{1}{12\beta d} \right )\\
    &= \exp \left ( \frac{12d_e-1}{12^2\beta^2d^2+12\beta d-12^2\beta d d_e-12\beta d_e} \right ) \\
    &= \exp \left (  \frac{d_e-\frac{1}{12}}{\beta d(12\beta d +1 -12d_e)-d_e} \right ),
\end{align}
which both go to $1$ as $\beta \rightarrow \infty$.

Hence, \thesbo' worst-case success probability is bounded from below and above by sequences that converge to the same point as $D\rightarrow \infty$.
The squeeze theorem~(e.g.,~\cite{sohrab2003basic}) implies
\[
    \lim_{D\rightarrow \infty} p_B(Y^*; D, d ,d_e) = \frac{d!}{(d-d_e)!d^{d_e}}.
\]
\end{proof}

\section{Consistency of \thesbo}\label{app:global_convergence}
We prove the global convergence of function values for \thesbo.
The proof idea is similar to \citet{eriksson2021scalable} but relaxes the assumption of a unique global minimizer.
By construction, $f$ is sparse (see Definition~\ref{def:function-with-active-subspace}), i.e., there exists a set of dimensions of $f$ that do not influence the function value.
Thus, an optimal solution stays optimal regardless of how inactive dimensions are set.
This is why we must relax the assumption of a unique global minimizer in the input space.
Instead, we assume a unique global minimizer in the active subspace $\bm{z}^*\in \mathcal{Z}$ that can map to arbitrarily many minimizers in the input space.
Note that this assumption covers the case when the target space corresponds to the input space, i.e., $d=D$.

\begin{theorem}[\textbf{\thesbo consistency}]
With the following definitions:
\renewcommand{\theenumi}{D\arabic{enumi}}
\begin{enumerate}
    \item $\{\bm{x}_k\}_{k=1}^{\infty}$ is a sequence of points of decreasing function value;
    \label{d1}
    \item $\bm{x}^*\in \argmin_{\bm{x}\in \mathcal{X}}f(\bm{x})$ is a minimizer in $\mathcal{X}$;
    \label{d2}
\end{enumerate}
and under the following assumptions:
\renewcommand{\theenumi}{A\arabic{enumi}}
\begin{enumerate}
    \item $D$ is finite; \label{enum:D_finite}
    \item $f$ is observed without noise; \label{enum:f_noiseless}
    \item $f$ is sparse and bounded in $\mathcal{X}$, i.e., $\exists\, C\in \mathbb{R}_{++}$ s.t. $|f(\bm{x})|<C\, \forall \,\bm{x}\in \mathcal{X}$; \label{enum:f_bounded}
    \item At least one of the minimizers $\bm{x}^*_i$ lies in a continuous region with positive measure; \label{enum:continuous_region}
    \item Once \thesbo reached the input dimensionality $D$, the initial points $\{\bm{x}_i\}_{i=1}^{n_{\textrm{init} }}$ after each TR restart for \thesbo are chosen such that $\forall \delta \in \mathbb{R}_{++}$ and $\bm{x}\in \mathcal{X}$, $\exists\, \nu(\bm{x}, \delta)>0$: $\mathbb{P}\left (\exists i: ||\bm{x}-\bm{x}_i ||_2\leq \delta \right )\geq \nu(\bm{x}, \delta)$, i.e., the probability that at least one point in $\{\bm{x}_i\}_{i=1}^{n_{\rm init}}$ ends up in a ball centered at $\bm{x}$ with radius $\delta$ is at least $\nu(\bm{x},\delta)$; \label{enum:ball_centered}
\end{enumerate}
\renewcommand{\theenumi}{\arabic{enumi}}
$f(\bm{x}_k)$ converges to $f(\bm{x}^*)$ with probability 1.
\end{theorem}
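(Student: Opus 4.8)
The plan is a contradiction argument that combines monotone--bounded convergence of the incumbent value with an ``infinitely often restart'' argument. First, since $\{\bm{x}_k\}$ is a sequence of non-increasing function values (\ref{d1}) and $f$ is bounded on $\mathcal{X}$ (\ref{enum:f_bounded}), the real sequence $f(\bm{x}_k)$ is non-increasing and bounded below, hence converges to some $f^\dagger \ge f(\bm{x}^*)$. Assume for contradiction that $f^\dagger > f(\bm{x}^*)$ and set $\epsilon := f^\dagger - f(\bm{x}^*) > 0$.

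Next I would establish two structural facts about the run. (i) \thesbo reaches the input dimensionality $D$ after finitely many evaluations: by \ref{enum:D_finite} only finitely many splits are needed because the target dimensionality is multiplied by $b+1$ at each split, and each pre-$D$ phase ends as soon as its \ac{TR} base side length drops below $L_{\min}$. (ii) Once $d_n = D$, the \ac{TR} is restarted infinitely often with probability one. Both facts reduce to the claim that a single \ac{TR} instance is abandoned after a finite number of evaluations almost surely: its base side length is forced below $L_{\min}$ after a bounded number of net halvings, and --- since a success strictly decreases the (bounded-below) incumbent --- it cannot accumulate successes indefinitely either, so, as argued in \citet{eriksson2021scalable}, the instance is restarted after finitely many steps.

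Now I would use the geometry near the optimum together with the restart-sampling assumption. By \ref{enum:continuous_region} there is a global minimizer $\bm{x}^*_i$ that lies in a set of positive Lebesgue measure on which $f$ is continuous, so there is $\delta > 0$ with $f(\bm{x}) < f(\bm{x}^*) + \epsilon = f^\dagger$ for all $\bm{x} \in B(\bm{x}^*_i,\delta)\cap \mathcal{X}$, and $B(\bm{x}^*_i,\delta)$ has positive measure. By \ref{enum:ball_centered}, after every restart at least one of the $n_{\textrm{init}}$ fresh samples lies in $B(\bm{x}^*_i,\delta)$ with probability at least $\nu := \nu(\bm{x}^*_i,\delta) > 0$, uniformly over the algorithm's history. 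Conditioning on the past, the probability that none of the first $N$ post-$D$ restarts places a sample in $B(\bm{x}^*_i,\delta)$ is at most $(1-\nu)^N \to 0$; hence, with probability one, some restart eventually evaluates a point $\bm{x}$ with $f(\bm{x}) < f^\dagger$. Because observations are noise-free (\ref{enum:f_noiseless}), the incumbent from that step on has value at most $f(\bm{x}) < f^\dagger$, contradicting $f^\dagger = \lim_k f(\bm{x}_k)$ and monotonicity. Hence $f^\dagger = f(\bm{x}^*)$, i.e. $f(\bm{x}_k) \to f(\bm{x}^*)$ with probability $1$.

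The main obstacle is fact (ii) (and its twin (i)): rigorously ruling out a \ac{TR} instance that never terminates. This means connecting the discrete success/failure bookkeeping --- the $\tau_{\textrm{fail}}$ and $\tau_{\textrm{success}}$ counters and the (relative) improvement criterion --- to the boundedness of $f$, so that a non-terminating instance would have to make infinitely many non-negligible improvements to a quantity bounded below, which is impossible. Once this is in place, the rest is routine: the monotone-convergence setup, the Borel--Cantelli-style union bound (which only needs $\nu$ to be a genuine history-independent lower bound, as granted by \ref{enum:ball_centered}), and the local continuity argument around $\bm{x}^*_i$.
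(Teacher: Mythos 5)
Your proposal is correct, and its first half is essentially the paper's own argument: the paper also shows that \thesbo spends only finitely many evaluations at each target dimensionality because (a) the failure tolerance is bounded by $d_i\leq D$ and (b) a point counts as an improvement only if it beats the incumbent by a fixed constant $\gamma>0$, which a bounded $f$ permits only finitely often --- so the ``main obstacle'' you flag is resolved in the paper with exactly the argument you sketch, and it also yields your fact (ii) that post-$D$ restarts occur infinitely often. Where you genuinely diverge is the convergence step after dimensionality $D$ is reached: the paper reduces to \turbo's behavior, cites the global-convergence result of \citet{eriksson2021scalable}, and then invokes the random-search convergence theorem of \citet{solis1981minimization}, using assumption A4 only to identify the essential infimum with the minimum; you instead give a self-contained contradiction argument --- monotone bounded convergence of the incumbent to some $f^{\dagger}$, a ball around a minimizer on which $f<f^{\dagger}$ (via continuity from A4), the history-uniform lower bound $\nu$ from A5, and a $(1-\nu)^{N}\rightarrow 0$ product bound, with A2 ensuring the incumbent actually drops. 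Your route buys transparency: no external theorem, and each assumption's role is explicit. The paper's route buys brevity and a slightly weaker geometric requirement, since Solis--Wets only needs the $\varepsilon$-sublevel sets to have positive Lebesgue measure under the (uniform) restart distribution, whereas your application of A5, which is stated for Euclidean balls, needs A4 to deliver an actual ball of near-optimal values around the minimizer --- a marginally stronger reading of A4, though consistent with the paper's own informal use of it.
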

\begin{proof}
We first show that \thesbo must eventually arrive at an embedding equivalent to the input space.
By Assumption~\ref*{enum:D_finite}, the number of accepted ``failures'' (i.e., the number of times \thesbo needs to fail in finding a better solution until the TR base length is shrunk) is always finite since it is always bounded by the target dimension ($\forall i\, \tau_{\rm fail}^i\leq d_i$) which is at most equal to $D$ ($\forall i\, d_i\leq D$).
By the facts that \thesbo considers any sampled point an improvement only if it improves over the current best solution by at least some constant $\gamma\in\mathbb{R}_{++}$ and that $f$ is bounded (Assumption~\ref*{enum:f_bounded}), \thesbo can only perform a finite number of function evaluations without increasing the target dimensionality of its embedding.

Once \thesbo reaches $D$, it behaves like \turbo~\cite{eriksson2019scalable} for which \citet{eriksson2021scalable} proved global convergence assuming a unique global minimizer.
For the case $d_e<D$, we notice that multiple minima in the input space occur due to inactive dimensions that do not influence the function value.

The remainder of our proof is based on the convergence theorem for global search by~\citet{solis1981minimization}, which proves convergence of function values for random search with possibly multiple minima.
By considering the sequence
\[
    \left \{ \bm{x'}_i \in \argmin_{\bm{\hat{x}}\in \{\bm{x}_k\}_{k=1}^i}f(\bm{\hat{x}})\right \}_{i=1}^{\infty}
\]
of points of decreasing function values where $\{\bm{x}_k\}_{k=1}^i$ are the observations up to the $i$-th function evaluation, Definition~\ref*{d1} is satisfied.
Additionally, by the fact that, at each TR restart, \thesbo performs random restarts with uniform probability on $\mathcal{X}$, \thesbo satisfies the assumptions of the theorem by \citet{solis1981minimization}.

The theorem by~\citet{solis1981minimization} states that for a sequence $\{\bm{x}_k\}_{k=1}^{\infty}$ of sampling points with $\varepsilon \in \mathbb{R}_{++}$,
\begin{align*}
    & \lim_{k \rightarrow \infty} \mathbb{P}\left [ \bm{x}_k \in R_{\varepsilon} \right ]=1\\
    & R_{\varepsilon} = \{\bm{x}\in \mathcal{X} : f(\bm{x})<\alpha+\varepsilon\} \\
    & \alpha = \inf\{t: v(\bm{x}\in \mathcal{X}:f(\bm{x})<t)>0\}
\end{align*}
where $R_{\varepsilon}$ is the set of $\varepsilon$-optimal function values, $\alpha$ is the \emph{essential infimum}, and $v$ is the Lebesgue measure.
Note that the essential infimum $\alpha$ is equal to the minimum if the minimizer lies in a continuous region of positive measure, i.e., $\alpha = f(\bm{x}_i^*)$.
By Assumption~\ref*{enum:continuous_region} and by letting $\varepsilon \rightarrow 0$, $f(\bm{x}_k)$ converges to  $f(\bm{x}_i^*)$.
\end{proof}

\section{Additional empirical evaluations}

\subsection{Ablation study for the \thesbo embedding}\label{app:ablation_axus}
We conduct an ablation study to investigate the difference between the \thesbo and \hesbo embeddings.
We run \turbo of \citet{eriksson2019scalable} in an embedded subspace with the two different embeddings.
We use a version of \textsc{Ackley10} (ten active dimensions, i.e., $d_e=10$), where we shift the optimum away from the origin with a uniformly random vector $\bm{\delta} \in [-32.768, 32.768]^{d_e}$ with $\delta_i\sim \mathcal{U}(-32.768, 32.768)$.
The function we optimize is then
\begin{align*}
    f_{\textrm{ShiftedAckley10} }(\bm{x}) = f_{\textrm{Ackley10} }(\bm{x}+\bm{\delta}).
\end{align*}
We adjust the boundaries of the search space such that $f_{\textrm{ShiftedAckley10} }$ is evaluated on the domain $\mathcal{X}=[-32.768, 32.768]^{d_e}$.
The reason for shifting the optimum is that the original \textsc{Ackley} function has its optimum at the origin.
In that case, any \sparseembedding contains this optimum, even if all the active input dimensions are mapped to the same target dimension.

We add 10 dummy dimensions, such that $D=30$ and set $d=20$.
With this problem-setting, the \thesbo and \hesbo embeddings have a probability of approximately 0.27 and 0.07 of containing the optimum, respectively.

\begin{figure}[H]
    \centering
    \resizebox{0.8\textwidth}{!}{\includeinkscape{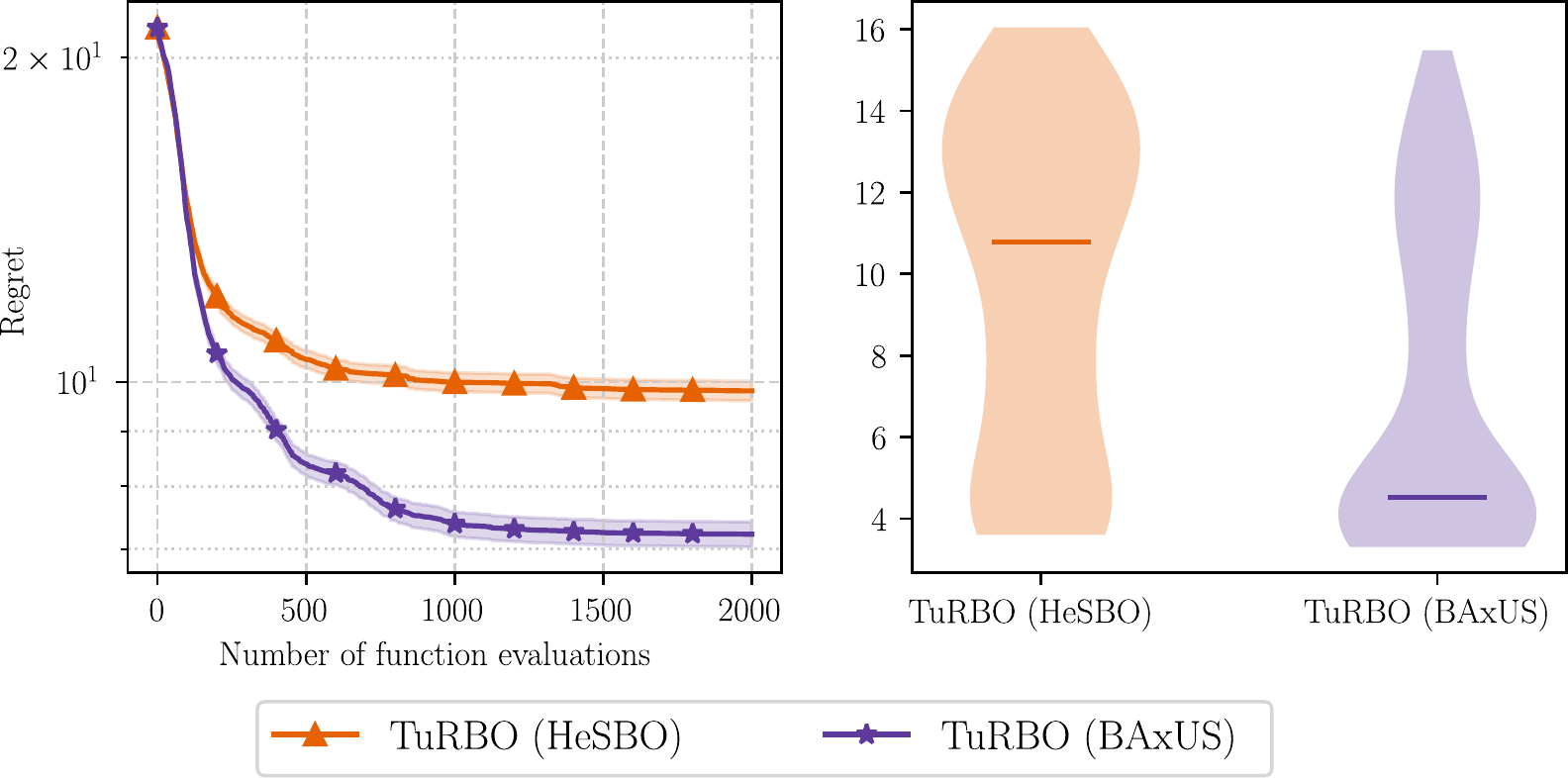_tex}}
    \caption{\textbf{Left}: the \thesbo embedding gives better optimization performance on the shifted \textsc{Ackley10} function: \turbo in embedded subspaces of the \thesbo and \hesbo embeddings. The \thesbo embedding has a higher probability to contain the optimum. \textbf{Right}: the distribution of the final incumbents (lower the better). The horizontal bars show the median.}
    \label{fig:ablation}
\end{figure}

The left side of Figure~\ref{fig:ablation} shows the incumbent mean for \turbo in the two different embedded subspaces.
The shaded regions show one standard error.
\turbo in an \thesbo embedding has significantly better optimization performance than in a \hesbo embedding.
The right side of Figure~\ref{fig:ablation} shows the distributions of the final incumbents and their median.
The \thesbo embedding leads to a significantly lower median and only rarely a similarly bad embedding as the \hesbo method when combined with \turbo.

We perform a two-sided Wilcoxon rank-sum statistical test to check the difference between the best observed function values for the two embeddings.
The difference is significant with $p\approx 0.00001$.

The performance difference between the two embeddings depends on the characteristics of the function and the different dimensionalities, the input dimensionality $D$, the target dimensionality $d$, and the effective dimensionality $d_e$.
For problems with few active dimensions and many input dimensions, the \thesbo and \hesbo embeddings become more similar (see Figure~\ref{fig:success_probabilities}).
However, by Corollary~\ref{corr:optimality}, the \thesbo embedding is, in expectation and in terms of the worst-case success probability, better than the \hesbo embedding for arbitrary sparse functions.

For functions with an optimum at the origin, both embeddings contain that optimum regardless of $d$: Even if all active input dimensions are mapped to the same target dimension, the optimum in the input space can be reached by ``setting'' this particular target dimension to zero.

\paragraph{\turbo with \thesbo embedding vs. \thesbo.}
We compare the simple idea of running \turbo in a \thesbo embedding of fixed target dimensionality with the \thesbo algorithm described in Section~\ref{sec:algorithm}.
We run this simple approach for $11$ different target dimensionalities $d$ ($2,10,20,\ldots,100$) on the \textsc{Lasso-Hard} benchmark and show the results with a sequential color map in Figure~\ref{fig:thesbo_vs_baxus}.
Only the first $d=2$-dimensional embedding achieves the same initial speedup as \thesbo, which is expected as \thesbo starts in a similarly low-dimensional initial embedding.
However, the fixed embedding cannot explore the input space sufficiently and has the worst final solution.
High-dimensional fixed embeddings have more freedom in exploring the input space;
however, they suffer from slower initial optimization performance.

\thesbo has the same initial speedup as the two-dimensional fixed embedding but can explore the space further by increasing the dimensionality of its embedding.
\begin{figure}[H]
    \centering
    \resizebox{0.5\textwidth}{!}{\includeinkscape{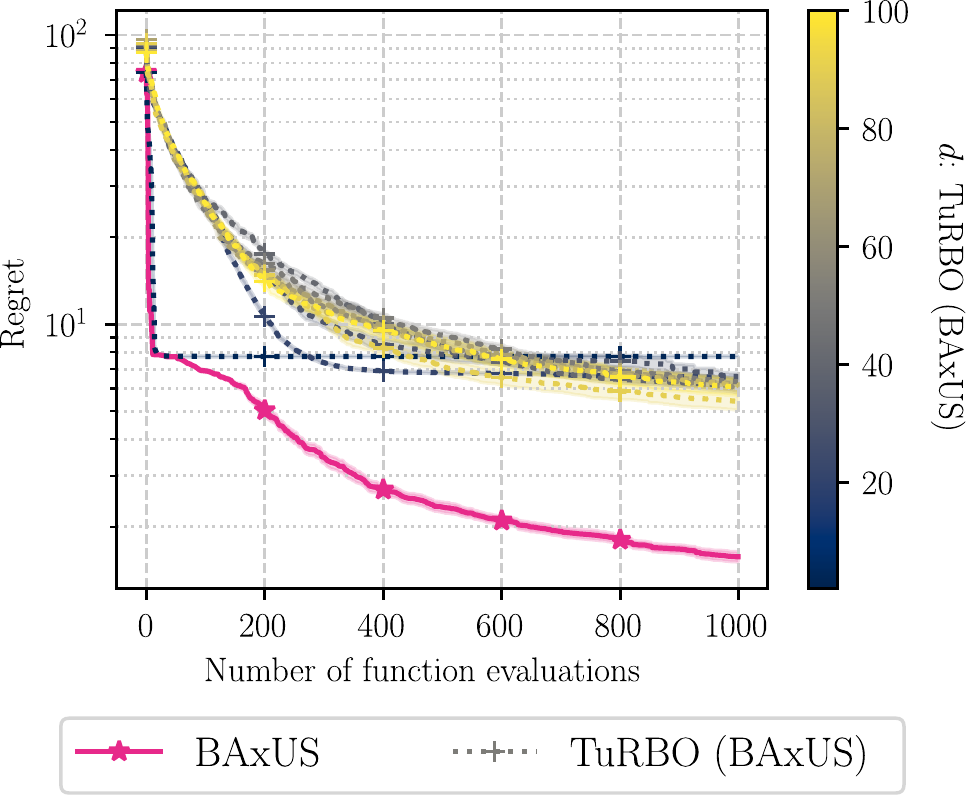_tex}}
    \caption{An evaluation of \thesbo and \turbo with \thesbo embeddings of different target dimensionalities on \textsc{Lasso-Hard}: We run \turbo with the \thesbo embedding for fixed target dimensionalities $d=2,10,20,\ldots,100$ and compare to \thesbo.}
    \label{fig:thesbo_vs_baxus}
\end{figure}

Summing up, we observe that \thesbo achieves a better performance than \turbo with a fixed embedding dimensionality.

\subsection{Evaluation on an additional Lasso benchmark}\label{app:lasso-dna}
In addition to the synthetic \textsc{Lasso-High} and \textsc{Lasso-Hard} benchmarks studied in Section~\ref{sec:results}, we evaluate \thesbo on the \textsc{Lasso-DNA} benchmark from \textsc{LassoBench}~\cite{vsehic2021lassobench}.
The \textsc{Lasso-DNA} benchmark is a biomedical classification task, taking binarized DNA sequences as input~\cite{vsehic2021lassobench}.
\begin{figure}[H]
    \centering
    \resizebox{.7\textwidth}{!}{\includeinkscape{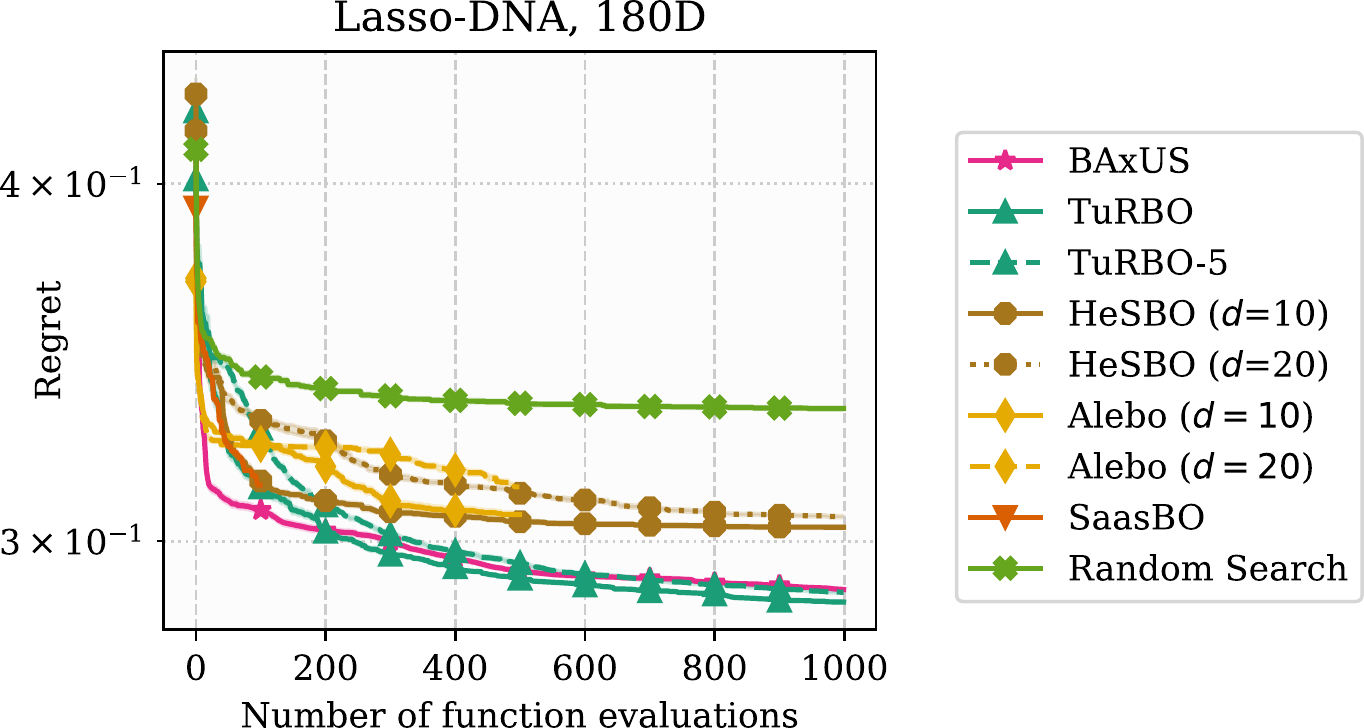_tex}}
    \caption{
        \thesbo and baselines on \textsc{Lasso-DNA}.
        As before, \thesbo makes considerable progress in the beginning and converges faster than \turbo and \cmaes.}
    \label{fig:lasso-dna}
\end{figure}
Figure~\ref{fig:lasso-dna} shows the mean performance of \thesbo on the \textsc{Lasso-DNA}.
Each line shows the incumbent mean;
the shaded regions around the lines show one standard error.
We see the same qualitative behavior as discussed in Section~\ref{sec:results}:
\thesbo reaches a good initial solution faster than any other method.%

After a worse start, \turbo finds slightly better solutions than \thesbo.

\subsection{Evaluation on additional MuJoCo benchmarks}\label{app:mujoco}
We evaluate \thesbo with the same baselines as in Section~\ref{sec:results}.
We use the implementation of~\cite{wang2020learning}\footnote{\url{https://github.com/facebookresearch/LA-MCTS/blob/main/example/mujuco/functions.py}, last accessed: 06/10/2022}, in particular we use the \texttt{Gym} environments \texttt{Ant}, \texttt{Swimmer}, \texttt{Half-Cheetah}, \texttt{Hopper}, \texttt{Walker 2D}, and \texttt{Humanoid 2D}, all in version 2.
For the 6392-dimensional \texttt{Humanoid} benchmark, we limit the target dimensionality of \thesbo to 1000 dimensions to keep the split budgets sufficiently large.
For the other benchmarks, we do not limit the target dimensionality.
Due to the high variance between runs, we ran all methods for 50 different runs.

We summarize the results in Fig.~\ref{fig:mujoco}. 
We observe that \thesbo obtains equal or better solutions than the competitors on four out of six benchmarks.
On the 120-dimensional \texttt{Walker} benchmark, \thesbo is the clear winner, followed by \turbo and \cmaes.
On the 888-dimensional \texttt{Ant} benchmark, \hesbo finds the best solutions, followed by \thesbo that outperforms \turbo and \cmaes.
For the 102-dimensional \texttt{Half-Cheetah}, \turbo produces the best solutions, followed by \cmaes and \thesbo; here, the subspace-based approaches (\alebo and \hesbo) find significantly worse solutions.
For the 6392-dimensional \texttt{Humanoid 2D}, \cmaes obtains the best solutions, followed by \thesbo, \alebo, and \hesbo.
\begin{figure}
    \centering
    \resizebox{\textwidth}{!}{\includeinkscape{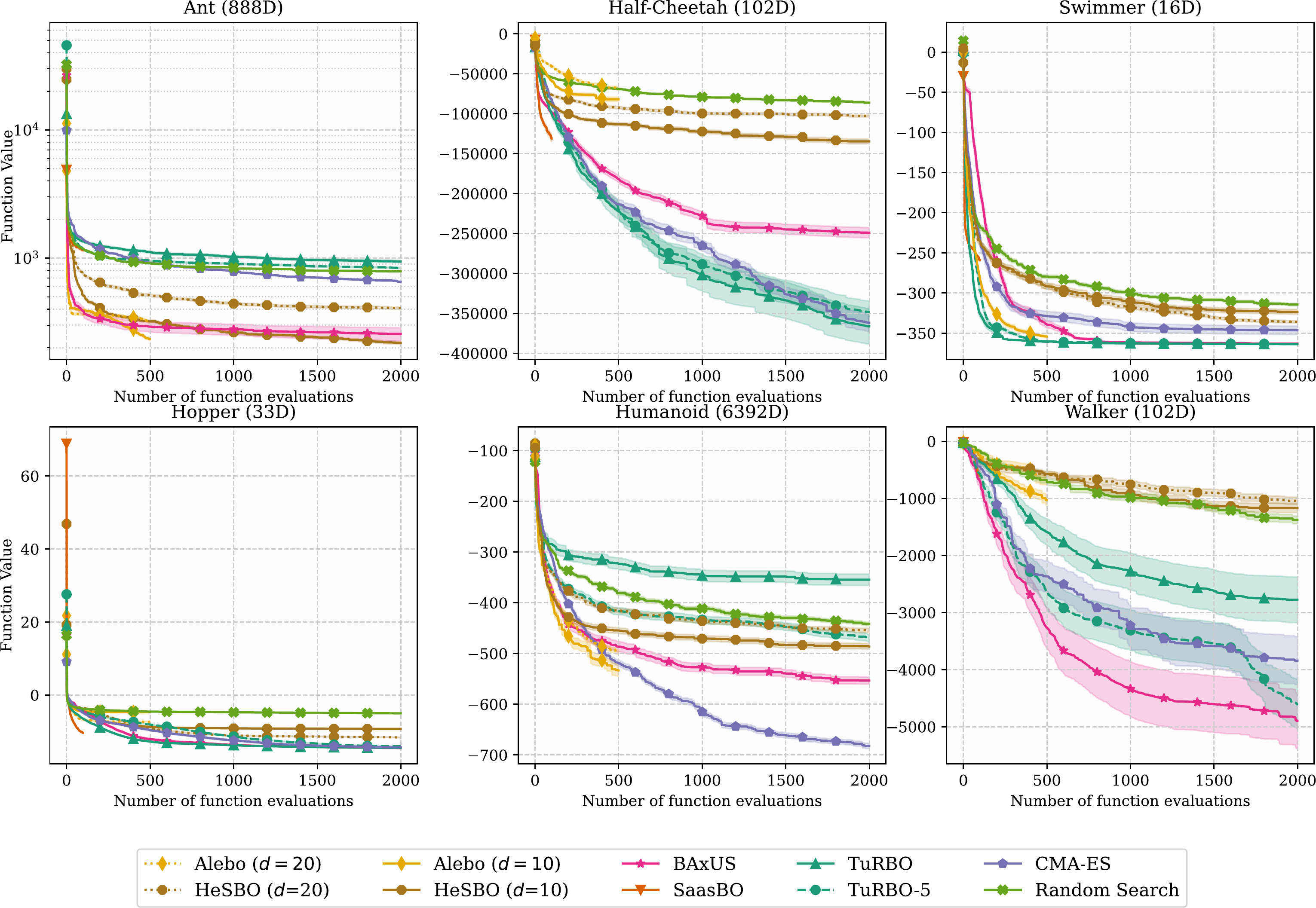_tex}}
    \caption{An evaluation of \thesbo and other methods on high-dimensional test problems of MuJoCo.
    }
    \label{fig:mujoco}
\end{figure}

\section{The nested family of random embeddings}\label{app:opei}
We describe the method for increasing the target dimensionality under the retention of the observations.
Suppose that we have collected $n$ observations and are in target dimension $d$ when Algorithm~\ref{alg:opei} is invoked.
Algorithm~\ref{alg:opei} loops over the target dimensions $1,\ldots,d$.
For each target dimension, the contributing input dimensions are randomly re-assigned to new bins of given sizes.
This can, for example, be realized by first randomly permuting the list of contributing input dimensions, and then dividing the list into $b+1$ chunks (bins).
If the number of contributing input dimensions is less than $b+1$ (remember that $b$ is the number of \emph{new} bins), then it is not possible to re-assign the contributing input dimensions to $b+1$ bins.
Therefore, we re-assign the contributing input dimensions to $\hat{b}=\min(b, l_s-1)$ new bins, where $l_s$ is the number of contributing input dimensions to the $s$-th target dimension.
This also ensures that the target dimension never grows larger than $D$ in the \thesbo embedding.
We evenly distribute the $l_s$ contributing input dimensions across the $\hat{b}$ bins by again using the \thesbo embedding.
This gives a smaller (in terms of number of rows) projection matrix $\Tilde{S}^\T$ which we finally use to update $S^\T$:

\begin{algorithm}[H]
\caption{Observation-preserving embedding increase}\label{alg:opei}
\begin{algorithmic}
\Require transposed embedding matrix $S^\T$, number of new bins per latent dimension $b$, observed points $Y\in [-1, 1]^{n\times d}$.
\Ensure updated transposed embedding matrix $S^\T$ and updated observation matrix $Y$
\For{$s \in \{1,\ldots,d\}$}
    \State $\bm{D}^s \gets $ contributing input dimensions of $s$-th latent dimension of the current embedding
    \State $l_s\gets |\bm{D}^s|$
    \State $\hat{b}\gets \min(b,l_s-1)$ \Comment{If $l_s-1<b$, we can at most create $l_s-1$ new bins.}
    \State Copy and append $s$-th column of $Y$ $\hat{b}$ times at the end of $Y$.
    \State Add $\hat{b}$ zero columns at the end of $S^\T$.
    \State $\bm{\sigma} \gets $  signs of dimensions $\in \bm{D}^s$.
    \State $\Tilde{S}^\T \gets$ \texttt{Baxus-Embedding}($l_s$, $\hat{b}+1$) \Comment{Re-assign input dims. equally\footnotemark, $\Tilde{S}^\T \in \{0,\pm 1\}^{l_s\times \hat{b}+1}$}
    \For{$i\in \{1,\ldots,l_s\}, j\in \{1,\ldots,\hat{b}+1\}$}
        \If{$\Tilde{S}^\T_{ij}\neq 0$}
            \If{$j>1$} \Comment{Move values that fall into new bins to end of $S^\T$.}
                \State $S^\T_{\bm{D}^s_{i},\hat{d}-\hat{b}-1+j}\gets \bm{\sigma}_i$ \Comment{$\hat{d}$: columns of $S^\T$}
                \State $S^\T_{\bm{D}^s_{i},s}\gets 0$ \Comment{Set value in ``old'' column to zero.}
            \EndIf
        \EndIf
    \EndFor
\EndFor
\State \textbf{Return} $S^\T$ and $Y$.
\end{algorithmic}
\end{algorithm}
\footnotetext{Equally means that all $\hat{b}+1$ bins have roughly the same number of contributing input dimensions. The number of contributing input dimensions to the different bins differ by at most 1.}

\section{Additional details on the implementation and the empirical evaluation}\label{append:implementation_details}
We benchmark against \saasbo, \turbo, \hesbo, \alebo, and \cmaes:
\begin{itemize}[leftmargin=*]
\item  For \saasbo, we use the implementation from~\cite{saasbo} (\url{https://github.com/martinjankowiak/saasbo}, license: none, last accessed: 05/09/2022).
\item For \turbo, we use the implementation from~\cite{eriksson2019scalable} (\url{https://github.com/uber-research/TuRBO}, license: \texttt{Uber}, last accessed: 05/09/2022).
\item For \hesbo and \alebo, we use the implementation from~\cite{alebo} (\url{https://github.com/facebookresearch/alebo}, license: \texttt{CC BY-NC 4.0}, last accessed: 05/09/2022).
\item For the \textsc{Lasso} benchmarks, we use the implementation from~\cite{vsehic2021lassobench} (\url{https://github.com/ksehic/LassoBench}, license: \texttt{MIT} and \texttt{BSD-3-Clause}, last accessed: 05/09/2022).
\end{itemize}
We use \texttt{GPyTorch} (version 1.8.1) to train the \ac{GP} with the following setup:
We place a top-hat prior on the Gaussian likelihood noise, the signal variance, and the length scales of the Matérn 5/2 ARD kernel.
The interval for the noise is $[0.005, 0.2]$, for the signal variance $[0.05, 20]$, and for the lengthscales $[0.005, 10]$. 

We evaluate on the synthetic \textsc{Branin2}\footnote{See \url{https://www.sfu.ca/~ssurjano/branin.html}, last accessed: 05/09/2022} and \textsc{Hartmann6}\footnote{See  \url{https://www.sfu.ca/~ssurjano/hart6.html}, last accessed: 05/09/2022} functions.
Since we augment the function with dummy dimensions, we use the same domain for $x_1$ and $x_2$, namely $[-5,15]^D$ for \textsc{Branin2} and $[0,1]^D$ for \textsc{Hartmann6}.

Similar to \turbo, we sample a $\min(100d_n, 5000)$-element Sobol sequence on which we minimize the posterior sample.
To maximize the marginal log-likelihood of the \ac{GP}, we sample 100 initial hyperparameter configurations.
The ten best samples are further optimized using the \textsc{Adam} optimizer for $50$ steps.

We ran the experiments for approximately $15{,}000$ core hours on Intel Xeon Gold 6130 CPUs provided by a compute cluster.

\end{document}